\DeclareMathOperator*{\argmin}{arg min} 
\DeclareMathOperator*{\supp}{supp}
\DeclareMathOperator\erf{erf}
\DeclareMathOperator{\sign}{sign}
\newcommand{\R}{\mathbb{R}}
\newcommand{\vct}[1]{\bm{#1}}
\newcommand{\mtx}[1]{\bm{#1}}
 \newcommand{\scalprod}[1]{\left\langle #1  \right \rangle}
\newtheorem{theorem}{Theorem}[]
\newtheorem{lemma}[theorem]{Lemma}
\newtheorem{cor}[theorem]{Corollary}
\newtheorem{remark}[theorem]{Remark}
\begin{document}
\title{One-bit compressive sensing with norm estimation}

\author{Karin Knudson\footnote{University of Texas at Austin, kknudson@math.utexas.edu}  \hspace{1mm}, Rayan Saab\footnote{University of California San Diego, rsaab@ucsd.edu. R. Saab has been supported in part by a UCSD  Research Committee Award, a Hellman Fellowship, and the NSF under grant DMS 1517204. }  \hspace{1mm}, and Rachel Ward\footnote{University of Texas at Austin, rward@math.utexas.edu.  R. Ward has been supported in part by an Alfred P. Sloan Research Fellowship, an AFOSR Young Investigator Award, and DOD-Navy grant N00014-12-1-0743.}}
\maketitle
  \begin{abstract}

  Consider the recovery of an unknown signal $\vct{x}$ from quantized linear measurements.  In the one-bit compressive sensing setting, one typically assumes that $\vct{x}$ is sparse, and that the measurements are of the form $\sign(\langle \vct{a}_i, \vct{x} \rangle) \in \{\pm1\}$.  Since such measurements give no information on the norm of $\vct{x}$, recovery methods typically assume that $\| \vct{x} \|_2=1$.  We show that if one allows more generally for quantized \emph{affine} measurements of the form $\sign(\langle \vct{a}_i, \vct{x} \rangle + b_i)$, and if the vectors $\vct{a}_i$ are random, an appropriate choice of the affine shifts $b_i$ allows norm recovery to be easily incorporated into existing methods for one-bit compressive sensing.  Additionally, we show that for arbitrary fixed $\vct{x}$ in the annulus $r \leq \| \vct{x} \|_2 \leq R$, one may estimate the norm $\| \vct{x} \|_2$ up to additive error $\delta$ from $m \gtrsim R^4 r^{-2} \delta^{-2}$ such binary measurements through a single evaluation of the inverse Gaussian error function.  Finally, all of our recovery guarantees can be made \emph{universal} over sparse vectors, in the sense that with high probability, one set of measurements and thresholds can successfully estimate all sparse vectors $\vct{x}$ in a Euclidean ball of known radius. 
  
\end{abstract}


\section{Introduction}
Compressive sensing, as introduced in \cite{carota06, donoho2006compressed, candes2006compressive}, concerns the approximation of a sparse (or approximately sparse) vector $\vct{x} \in \mathbb{R}^n$ from linear measurements of the form 
\[
\label{CS:basic}
y_i = \scalprod{\vct{a}_i,\vct{x}}, \quad i = 1,2, \dots, m.
\]
To allow processing using digital computers, the measurements $y_i$ must be quantized to a finite number of bits in practical compressive sensing architectures.  In the extreme case, it is of interest to consider the \emph{one-bit compressive sensing} problem, as introduced in \cite{Boufounos08},  which studies the approximation of a sparse (or almost sparse) vector $\vct{x} \in \mathbb{R}^n$ from one-bit quantized measurements of the form
\begin{equation}
\label{1bit}
y_i = \sign( \scalprod{ \vct{a}_i, \vct{x}} ), \quad i = 1,2,\dots,m,
\end{equation}
where $\sign(t) = 1$ when $t \geq 0$ and $\sign(t) = -1$ when $t < 0$.  
In  practice a comparator (one-bit quantizer) is easy to build, fast, and consumes relatively little power, so  one-bit measurements may even be \emph{preferable} in situations where finer quantization is expensive relative to additional measurements.   One-bit measurements may carry added benefits such as robustness to certain nonlinearities in the signal acquisition process (saturation, for example). Additionally, recent research indicates that in some settings, recovery from one-bit measurements may even out-perform multi-bit compressed sensing (from the point of view of total number of bits used versus reconstruction error) \cite{laska2012regime}.   We refer the reader to the webpage \cite{DSP} for a list of applications of one-bit compressed sensing. 
\subsection{Measurement Model and Objectives }We are interested in the measurement model 
\[y_i = \sign( \scalprod{ \vct{a}_i, \vct{x}}  + b_i), \quad i = 1,2,\dots,m,\]
where the vectors $\vct{a}_i$ are Gaussian random vectors, drawn once, and fixed thereafter. Our goal is to recover (effectively) sparse vectors $\vct{x}$, satisfying a norm bound, say $\|\vct{x}\|_2 \leq R$.  We consider two models for the shifts $b_i$. First we consider Gaussian random variables $b_i$, again drawn once and fixed thereafter. We also consider fixed thresholds $b_i=b$, chosen appropriately depending on a lower bound on the norm of our signals. In the latter case, if the goal is just to estimate the norm $\| \vct{x} \|_2$ and not also the direction $\vct{x}/\| \vct{x} \|_2$, our recovery method works for an arbitrary $\vct{x}$ in a fixed annulus and the sparsity assumption can be dropped.  In both cases our objective is to accurately recover the vector $\vct{x}$ (i.e., both its magnitude and direction) and for each type of threshold $b_i$ we propose a recovery technique and prove associated theoretical guarantees (in Sections \ref{sec:alga} and \ref{sec:algb}, respectively). Specifically, we prove decay bounds on the reconstruction error as the number of measurements $m$ increases.   In the remainder of the introduction, we discuss relevant prior work on one-bit compressed sensing (Section \ref{sec:Prior}), discuss our contributions (Section \ref{sec:Contributions}), and relate our methods to other quantization and reconstruction techniques (Section \ref{sec:Related}). 

\subsection{Prior Work} \label{sec:Prior}
Originally introduced in \cite{Boufounos08} by Boufounos and Baraniuk, one-bit compressed sensing was studied in detail in \cite{Jacques11} where a lower bound on the reconstruction error was provided along with heuristic algorithms for the recovery of the underlying signals. As one-bit quantization via \eqref{1bit} necessarily loses all magnitude information, the afore-mentioned bound on the reconstruction error pertained to approximating the magnitude-normalized signal $\frac{\vct{x}}{\|\vct{x}\|_2}$ by another unit-norm vector. In other words, the results were on the reconstruction accuracy associated with the direction of $\vct{x}$. 

{\bf Lower bounds.} Consider the set of bounded sparse signals $\Sigma_k^* = \{\vct{x}\in \R^n, |\supp(\vct{x})| \leq k, \|\vct{x}\|_2 \leq 1 \}$ and denote by $\vct{A}$ the $m\times n$ matrix with $\vct{a}_i$ as its rows.  
Let $\mathcal{Q} = \sign( \vct{A}\Sigma_k^*) \subset \{\pm 1 \}^m $  be the quantization of $\Sigma_k^*$  under the one-bit compressed sensing model. Thus, for each $\vct{q} \in \mathcal{Q}$ there is a quantization cell, i.e., a set of vectors $S_{{\bf q}} \subset \Sigma_k^*$ for which \[\vct{x}\in S_{{\bf q}} \implies \sign(\vct{Ax}) = \vct q.\] An optimal decoder, given $\vct{q}$, minimizes the worst case error over all $\vct{x} \in S_{{\bf q}}$, so it returns $\vct{x}^\sharp  = \argmin\limits_{ \vct{z} \in S_{{\bf q}}}\max\limits_{\vct{x}\in S_{{\bf q}}}\|\vct{x}-\vct{z}\|_2.$ 

Thus the minimal worst-case error associated with a cell is simply the radius of the cell, and the overall optimal error  $\varepsilon_{opt}$ is the radius of the largest cell. Taking this geometric view, it was shown in \cite{Jacques11} that   \[\varepsilon_{opt}\geq \frac{ck}{m+k^{3/2}} \to_m \Omega (k/m)	.\] So, at best, the error decays linearly as the number of measurements $m$ increases. This lower bound holds regardless of the reconstruction method used (whether it is numerically efficient or not) and is derived solely based on the geometry of the problem.  

{\bf Tractable recovery with theoretical guarantees.} The first  computationally tractable method (with provable error guarantees) for reconstructing effectively sparse vectors from one-bit measurements of the form \eqref{1bit}  was proposed by  Plan and Vershynin  \cite{plan2013one} (cf. \cite{Plan12}).  In particular, they prove the following:

 \begin{theorem}[Theorem 1.1 of \cite{plan2013one}] 
 \label{prop131}
Let $\vct{a}_i\in \R^n$, $i = 1,...,m$, be random vectors with independent and identically distributed standard Gaussian entries and suppose $m>C \delta^{-5}s\log^2 (2n/s)$.\footnote{In \cite{plan2013one} this bound is expressed in terms of $\delta$, but it will be more convenient for us to state in terms of $m$.} With probability exceeding $1-C\exp{(-c\delta m)}$, the following holds for every $\vct{x}\in \R^n$ with $\|\vct{x}\|_1 \leq \sqrt{s}\|\vct{x}\|_2$: \quad the solution $\vct{x}^\sharp$ to the optimization problem 
\begin{equation}
\label{1bitopt}
\min_{\vct{x}'\in\R^n} \|\vct{x}'\|_1 \quad \text{ subject to } \sum_{i=1}^m|\langle \vct{a}_i, \vct{x}'\rangle| = m \quad\text{and} \quad \sign{\langle \vct{a}_i,\vct{x}'\rangle} =\sign{\langle \vct{a}_i,\vct{x}\rangle}, \quad i \in [m]
 \end{equation}
 satisfies
 \[
 \label{known_err31}
 \left\| \frac{\vct{x}}{ \| \vct{x} \|_2} - \frac{\vct{x}^{\sharp}}{\| \vct{x}^\sharp \|_2 }\right\|_2 \leq  \delta.
  \] 
  Above, $C$ and $c$ are universal constants, independent of all other parameters.
 \end{theorem}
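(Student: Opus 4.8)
Since the constraints in the linear program \eqref{1bitopt} involve only $\sign\scalprod{\vct{a}_i,\vct{x}} = \sign\scalprod{\vct{a}_i,\vct{x}/\|\vct{x}\|_2}$, I may assume $\|\vct{x}\|_2 = 1$, so $\vct{x} = \bar{\vct{x}}$ lies in the effectively sparse set $K_s := \{\vct{u}\in\R^n : \|\vct{u}\|_2 \le 1,\ \|\vct{u}\|_1 \le \sqrt{s}\}$. The plan is to reduce everything to one scalar estimate, via the identity
\[
\left\| \frac{\vct{x}}{\|\vct{x}\|_2} - \frac{\vct{x}^\sharp}{\|\vct{x}^\sharp\|_2} \right\|_2^2 = 2\Big( 1 - \frac{\scalprod{\bar{\vct{x}},\,\vct{x}^\sharp}}{\|\vct{x}^\sharp\|_2} \Big),
\]
so that it suffices to prove $\scalprod{\bar{\vct{x}},\vct{x}^\sharp} \ge (1 - \delta^2/2)\|\vct{x}^\sharp\|_2$ with the stated probability. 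Two elementary facts about a standard Gaussian vector $\vct{a}\in\R^n$ drive the argument: $\E[\sign(\scalprod{\vct{a},\vct{x}})\,\vct{a}] = \sqrt{2/\pi}\,\bar{\vct{x}}$ and $\E|\scalprod{\vct{a},\vct{u}}| = \sqrt{2/\pi}\,\|\vct{u}\|_2$.

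\emph{A priori properties of $\vct{x}^\sharp$.} Rescaling $\bar{\vct{x}}$ by $c_{\vct{x}} := m/\sum_i |\scalprod{\vct{a}_i,\bar{\vct{x}}}|$ yields a feasible point of \eqref{1bitopt}, whence $\|\vct{x}^\sharp\|_1 \le c_{\vct{x}}\|\bar{\vct{x}}\|_1 \le c_{\vct{x}}\sqrt{s} \lesssim \sqrt{s}$, the last step because $\tfrac1m\sum_i|\scalprod{\vct{a}_i,\bar{\vct{x}}}|$ concentrates around $\sqrt{2/\pi}$. The normalization constraint $\tfrac1m\sum_i|\scalprod{\vct{a}_i,\vct{x}^\sharp}| = 1$, together with concentration of $\tfrac1m\sum_i|\scalprod{\vct{a}_i,\vct{u}}|$ around $\sqrt{2/\pi}\|\vct{u}\|_2$ taken uniformly over a dilate of $K_s$, pins $\|\vct{x}^\sharp\|_2$ to within $O(\delta^2)$ of $\sqrt{\pi/2}$; thus $\vct{x}^\sharp$ lies, up to a constant rescaling, in a fixed set $T$ comparable to $K_s$. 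Next, feasibility forces $\sign\scalprod{\vct{a}_i,\vct{x}^\sharp} = \sign\scalprod{\vct{a}_i,\vct{x}} =: y_i$, hence $y_i\scalprod{\vct{a}_i,\vct{x}^\sharp} = |\scalprod{\vct{a}_i,\vct{x}^\sharp}|$, and by the normalization constraint
\[
\scalprod{\tfrac1m\sum_i y_i\vct{a}_i,\ \vct{x}^\sharp} = \tfrac1m\sum_i|\scalprod{\vct{a}_i,\vct{x}^\sharp}| = 1
\]
\emph{exactly}.

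\emph{Uniform concentration and conclusion.} The heart of the proof is to show that, with probability at least $1 - C\exp(-c\delta m)$, one has simultaneously over all $\vct{x}\in K_s$ (which determine the signs $y_i$) and all $\vct{u}\in T$ that
\[
\Big| \tfrac1m\sum_{i=1}^m y_i\scalprod{\vct{a}_i,\vct{u}} - \sqrt{2/\pi}\,\scalprod{\bar{\vct{x}},\vct{u}} \Big| \lesssim \delta^2 .
\]
I would bound this supremum by Gaussian-width and chaining estimates for the empirical process indexed by a product of two dilates of $K_s$, whose Gaussian width is $\lesssim \sqrt{s\log(2n/s)}$; the process concentrates once $m$ exceeds roughly $(\mathrm{width})^2$ divided by the squared accuracy, with extra factors of $\delta^{-1}$ lost because $\sign$ is not Lipschitz and because the $\vct{x}$-dependence of the $y_i$ must be discretized — tracking these losses yields the requirement $m \gtrsim \delta^{-5}s\log^2(2n/s)$. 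Feeding $\vct{u} = \vct{x}^\sharp \in T$ into this estimate and combining with the exact identity above gives $\scalprod{\bar{\vct{x}},\vct{x}^\sharp} \ge \sqrt{\pi/2}\,(1 - O(\delta^2))$, and since $\|\vct{x}^\sharp\|_2 = \sqrt{\pi/2} + O(\delta^2)$ we conclude $\scalprod{\bar{\vct{x}},\vct{x}^\sharp}/\|\vct{x}^\sharp\|_2 \ge 1 - O(\delta^2)$, which is the desired bound after adjusting the constant in front of $\delta$.

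The main obstacle is the uniform concentration step: the estimate must hold for \emph{all} signal directions $\vct{x}$ at once, yet $y_i = \sign\scalprod{\vct{a}_i,\vct{x}}$ depends discontinuously on $\vct{x}$, so ordinary Lipschitz concentration cannot be applied directly. Overcoming this — via a careful net/approximation argument over directions, or by exploiting that only the random hyperplane tessellation $\{\sign\scalprod{\vct{a}_i,\cdot}\}_{i\le m}$ matters and that it cuts the low-complexity set $K_s$ into comparatively few pieces — is the technical core of the argument, and is what forces the suboptimal power of $\delta$ in the measurement count.
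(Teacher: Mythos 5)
The paper offers no proof of this statement to compare yours against: it is imported verbatim as Theorem~1.1 of Plan and Vershynin \cite{plan2013one} and used as a black box throughout. Measured against the original source, your outline does reproduce the architecture of their argument: reduce the claim to a lower bound on $\langle \bar{\vct{x}},\vct{x}^\sharp\rangle/\|\vct{x}^\sharp\|_2$ via the polarization identity for unit vectors; show $\vct{x}^\sharp$ is effectively sparse by comparing with the feasible rescaling of $\bar{\vct{x}}$ and pin $\|\vct{x}^\sharp\|_2$ near $\sqrt{\pi/2}$ using uniform two-sided concentration of $\frac1m\sum_i|\langle\vct{a}_i,\vct{u}\rangle|$ over the effectively sparse ball; then combine the exact identity $\frac1m\sum_i y_i\langle\vct{a}_i,\vct{x}^\sharp\rangle=1$ with uniform concentration of the sign-weighted empirical process around $\sqrt{2/\pi}\,\langle\bar{\vct{x}},\vct{u}\rangle$. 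The two Gaussian identities you invoke are correct, and your bookkeeping of where the powers of $\delta$ are lost is plausible. But as a proof the proposal is incomplete precisely at the step that constitutes the theorem: the uniform concentration over all pairs $(\vct{x},\vct{u})$, with $y_i=\sign\langle\vct{a}_i,\vct{x}\rangle$ depending discontinuously on $\vct{x}$, is the entire technical content of \cite{plan2013one} (handled there by decomposing the effectively sparse set into genuinely sparse pieces and running a careful approximation argument, which is what produces the $\delta^{-5}$ and the $\log^2(2n/s)$). You correctly name this as the main obstacle, and your suggested remedies (nets over directions, counting the cells of the random hyperplane tessellation of $K_s$) are the right circle of ideas, but you do not carry any of them out; the deferred step is not a detail but the theorem itself. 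Since the present paper simply cites the result, the appropriate "proof" in context is the citation, and your sketch should be read as a correct roadmap to the external proof rather than a proof.
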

As alluded to earlier, a limitation of this result and in prior results treating the one-bit compressive sensing problem (e.g., \cite{Boufounos08}, \cite{Boufounos09}, \cite{jacques2011}, \cite{Plan12}, \cite{plan2013one},  \cite{Yan12})  is that the normalization $\| \vct{x} \|_2 $ must be known  \emph{a priori} to guarantee any accuracy in the reconstructed solution (we refer the reader to, e.g.,  \cite{boufounos2014quantization} for a detailed summary of prior results in the one-bit quantization setting).   If one considers only quantized linear measurements $y_i = \sign(\scalprod{\vct{a}_i, \vct{x}}$), then such an assumption \emph{must} be made:  quantized linear measurements give no information about the magnitude of the underlying vector $\vct{x}$.  
  As we will show, this problem can be resolved if one allows more generally for quantized affine linear measurements $y_i = \sign( \scalprod{\vct{a}_i, \vct{x}} + b_i).$  
 
 In certain applications, the addition of such affine shifts is natural; for example, in the application to threshold group testing  \cite{C13}, the statistician has some control over the threshold beyond which the measurement maps to a one.  Such control is also natural in the design of binary embeddings, where the goal is to find a transformation $f: \mathbb{R}^n \rightarrow \{0,1\}^m$ such that the Hamming distance between two binary codes is close to their similarity in the original space \cite{Jacques11}. 
 Of course, in certain applications it is not always possible to add fixed affine shifts. For example, if used for feature selection in classification problems, the quantization occurs naturally, (i.e., is not imposed by the user) and it is not possible to observe or design the underlying process (e.g., see \cite{Plan12} for more details).  
\bigskip
\subsection { Contributions of this paper} \label{sec:Contributions} We study the scenario where the norm of $\vct{x}$ is not known a priori, and must be estimated along with the direction, from one-bit compressive measurements.  Because measurements of the form $\sign(\scalprod{\vct{a}_i, \vct{x}})$ give no information about the norm, we consider the reconstruction of $\vct{x} \in \mathbb{R}^n$ from more general one-bit measurements of the form 
\begin{equation}
\label{1bit:new}
y_i = \sign( \scalprod{\vct{a}_i, \vct{x}} + b_i), \quad i = 1,2, \dots, m,
\end{equation}
 where $\vct{b} = (b_i)_{i=1}^m$ is known. For reconstructing $\vct{x}$ (and $\|\vct{x}\|_2$) from the measurements \eqref{1bit:new} we propose and analyze two algorithms. Our main results are presented in Theorem \ref{normest1}, Theorem \ref{normest2}, Theorem \ref{normest_uniform} and Corollary \ref{combined}. Below, we discuss both algorithms: 

\begin{itemize}
\item {\bf \emph{Augmented convex programming approach.}} When the shifts $b_i$ are standard Gaussian variables and the measurement vectors $\vct{a}_i$ have i.i.d. standard Gaussian entries, we can re-write the affine measurements \eqref{1bit:new} as augmented linear measurements
\begin{align}
y_i &= \sign( \scalprod{\vct{a}_i, \vct{x}} + b_i) =  \sign( \scalprod{\tilde{\vct{a}_i}, \tilde{\vct{x}}}), \nonumber 
\end{align}
where $\tilde{\vct{a}}_i = (\vct{a}_i, b_i)$ and $\tilde{\vct{x}}\in \R^{n+1}$ is given by  $\tilde{\vct{x}}=(\vct{x}, 1)$.  We use a standard one-bit compressed sensing recovery method, such as that of Plan and Vershynin \eqref{1bitopt} to give us an estimate $\vct{x}^{\sharp}$ of $\tilde{\vct{x}}$, albeit without magnitude information. Defining $[n]:=\{1,...,n\}$ and denoting by $\vct{x}_T$ the restriction of $\vct{x}$ to $T\subset [n]$, we note that if each ratio $ x^{\sharp}_j /\tilde{x}_j$ is roughly the same then the ratio of the norms $ \| \vct{x}_{[n]}^{\sharp} \|_2 / \| \vct{x} \|_2 $ should be close to the \emph{known} ratio $x^{\sharp}_{n+1} / \tilde{x}_{n+1} = x^{\sharp}_{n+1}/1.$  Rearranging, this gives
$$\| \vct{x} \|_2  \approx  \frac{ \| \vct{x}^{\sharp}_{{[n]}}\|_2 }{x^{\sharp}_{n+1}}.$$ 
In Section \ref{sec:alga}, we formalize this intuition and prove theoretical guarantees for this method (see Theorem \ref{normest1}). 

\item {\bf \emph{Empirical distribution function approach.}} When the shifts $b_i$ are all set to a common, non-random value $\tau$, we propose a method based on the empirical cumulative distribution function to estimate the norm of $\vct{x}$.    This method is motivated by the observation that $\langle \vct{a_i}, \vct{x} \rangle$ is a Gaussian random variable with mean zero and standard deviation of $\|\vct{x}\|_2$, which is the quantity we wish to estimate.  Thus, the fraction of the measurements  $\sign( \scalprod{\vct{a}_i, \vct{x}} - \tau)$ that are negative should approximate the cumulative distribution function of a ${\cal N}(0, \| \vct{x}\|^2)$ random variable evaluated at $\tau$.  The accuracy of the empirical cumulative distribution function (empirical cdf or EDF) is guaranteed by the Dvoretzky-Kiefer-Wolfowitz (DKW) Inequality from \cite{Dvoretzky1956}, and we use the value of the empirical cdf at $\tau$ to obtain an estimate for $\| \vct{x} \|_2$. In fact, for norm estimation alone, our results hold for an arbitrary fixed $\vct{x}$ in the annulus $r \leq \| \vct{x} \|_2 \leq R$.  Specifically, Theorem 10 says that one may estimate the norm $\| \vct{x} \|_2$ of such an $\vct{x}$ up to additive error $\delta$ from $m \gtrsim R^4 r^{-2} \delta^{-2}$ quantized binary measurements through a \emph{single evaluation of the inverse Gaussian error function}, serving as a Johnson-Lindenstrauss type embedding \cite{JL, Dasgupta} for binary measurements.  Theorem 11 strengthens this result for the class of sparse signals in the annulus by providing a \emph{uniform} error guarantee. Corollary 12 shows how Theorem 10 (or Theorem 11) can be combined with standard one-bit recovery methods (e.g., Theorem \ref{prop131}) to estimate both the norm and direction together.  Section \ref{sec:algb} presents our theoretical results on this method. 

\end{itemize}

  Both methods assume a known upper bound on the norm of $\vct{x}$ and the EDF method further assumes a known lower bound on $\| \vct{x} \|_2$.  For each method we present sufficient conditions on $m$ for \emph{universal} sparse signal recovery to hold with high probability (to within a desired accuracy $\delta > 0$).  We show that the performance of the augmented convex programming approach scales like $\| {\vct x} - {\vct x}^{\sharp} \|_2 \lesssim 1/m^{1/5}$, similar to the theoretical rate given in \cite{plan2013one} in the case where $\| \vct{x} \|_2= 1$ is assumed.   We show that the EDF method is guaranteed to do at least this well, and in certain regimes even achieves the scaling $\| {\vct x} - {\vct x}^{\sharp} \|_2 \lesssim 1/m^{1/2}$.  
  
  We include numerical experiments comparing the accuracy of each norm recovery method, and find that empirically, the performance of both methods scales like $\| {\vct x} - {\vct x}^{\sharp} \|_2 \lesssim 1/m$,  matching the known lower bound for the performance for one-bit compressive sensing {\cite{jacques2011}.  The numerical experiments suggest that the EDF method is more sensitive to the choice of parameters such as the lower and upper bounds on $\| \vct{x} \|_2$.  At the same time, for norm estimation alone, the EDF method is much more computationally efficient than solving a convex program. For example, the cost of the convex problem \eqref{1bitoptnorm} grows polynomially in the dimensions of the problem whereas the EDF method requires only a single evaluation of the inverse Gaussian error function, once the fraction of measurements quantized to $-1$ is tallied. It is thus linear in the number of measurements and does not even require knowledge of the measurement matrix $\vct{A}$.
  
Finally, we note that the proposed EDF method uses constant affine shifts $b_i = b,$ and the addition of such shifts should not incur any additional difficulties in the one-bit hardware design.   However, our theoretical results for the convex programming approach (Theorem \ref{normest1}) rely on the affine shifts being independently randomly generated.  The randomness in the shifts may not be necessary, and requiring them is possibly an artifact of the proof technique (as the distribution of the measurements should provide sufficient variability to recover the norm with a fixed dither). Nevertheless, from a practical point of view, we comment that one need only generate the shifts once as our result holds uniformly for all signals $\vct{x}$, with high probability on the draw of the vectors $\vct{a}_i$ and shifts $b_i$. Thus, when building the compressive sensors, the quantization thresholds do not need to be generated on the fly. They can be simply stored or built into the sensors. Still, it would be interesting to extend Theorem \ref{normest1} so it also holds for constant shifts.
  
  \subsection{Related work} \label{sec:Related}
The effectiveness of introducing random dither into the one-bit quantization framework is well-established (e.g., \cite{dk06, rac03}), albeit in \emph{non compressed sensing settings}. Moreover, the application of acquisition noise prior to quantization was recently shown to enable accurate reconstruction in the context of one-bit matrix completion \cite{davenport2012}, but towards a different purpose.  Additionally, the paper \cite{bbr13} demonstrated the strong robustness of one-bit compressive sensing to random noise added pre-quantization.  

The concept of estimating a signal, \emph{including its magnitude}, by changing the threshold of a one-bit quantizer \emph{adaptively} or by dithering is also well established. The vast literature on one-bit Sigma-Delta quantization  studies how adaptivity in the threshold selection can yield reconstruction errors  that decay quickly as a function of the number of measurements or, depending on the setting, as a function of the oversampling rate. We refer the reader to, e.g., \cite{Daub-dev, G-exp, DGK10}, for results in the setting of band limited functions  and to, e.g., \cite{benedetto2006sigma, blum:sdf, KSW12, KSY13} for results in the finite-frames setting. Recently\footnote{In particular, after the initial submission of this manuscript.}, one-bit Sigma-Delta quantization has also been shown to be effective in the compressed sensing context \cite{saab2015quantization}. Specifically, \cite{saab2015quantization} shows that certain one-bit (and multi-bit) Sigma-Delta quantization schemes achieve polynomial (and  root-exponential) error decay as a function of the number of measurements. Other recent work in the compressed sensing setting (e.g., \cite{Kamilov, baraniuk2014exponential}) also uses adaptive threshold selection for one-bit quantization and \cite{baraniuk2014exponential} achieves exponential decay  of the reconstruction error as a function of the number of measurements. 
These different methods for one-bit quantization (including our own) provide different trade-offs of computational and implementation complexity against reconstruction accuracy. 
For example, the Sigma-Delta approach requires memory elements to store certain state-variables (related to the thresholds) and it sequentially quantizes incoming measurements. It requires $r$ memory elements to achieve a reconstruction error decay of $O(m^{-r})$. Moreover, if one is allowed to choose the optimal $r$ as a function of $m$, then one can even obtain reconstruction error decay of $O(e^{-c\sqrt{m}})$. The scheme of \cite{baraniuk2014exponential} achieves error decay rates of $O(e^{-cm})$, but it requires a polynomial time algorithm (in the ambient dimension $n$) to update the quantization thresholds.  Thus, it requires the sensors collecting the measurements to have significant computational power and it incurs delays in acquiring the measurements (while the thresholds are updated). Moreover, with the approach in \cite{baraniuk2014exponential}, the quantization thresholds themselves must be transmitted as they are needed for reconstruction. In contrast, the non-adaptive one-bit scheme that we study in this paper is more simple (hence easier to implement) than either of the above approaches. It simply compares each incoming measurement to a fixed (known) threshold and yields a reconstruction error decay of $O(m^{-1/5})$, see Theorem \ref{normest1}. We conclude this discussion by reiterating that the choice of which one-bit (or even multi-bit)  scheme to use for quantizing compressed sensing measurements depends on the computational power, hence implementation complexity and cost, that one is willing to expend at the sensor. It also depends on whether one has many sensors collecting  spatial data at one time (making adaptive threshold selection difficult), or one sensor collecting data temporally (thereby enabling adaptive threshold selection).

  \section{Preliminaries}
  Throughout, we use $C, c, C_1,$ etc. to denote absolute constants whose values may change from line to line.  For integer $n$ we denote $[n] = \{1,2, \dots, n\}$.  Vectors are written in bold italics, e.g. $\vct{x}$, and their coordinates written in plain text so that the ith component of $\vct{x}$ is $x_i$.  The $\ell_1$ and $\ell_2$ norms of a vector $\vct{x} \in \mathbb{R}^n$ are defined as $\| \vct{x} \|_1 = \sum_{i=1}^n | x_i|$ and $\| \vct{x} \|_2 = ( \sum_{i=1}^n x_i^2 )^{1/2}$.  The number of nonzero coordinates of $\vct{x}$ is denoted by $\| \vct{x} \|_0 = | \text{supp}(\vct{x}) |$.  For a Gaussian random variable  $X$ with mean $\mu$ and variance $\sigma^2$, we write $X \sim {\cal N}(\mu, \sigma^2)$.

\bigskip

\noindent To prove our main results, we will need some lemmas.  The first lemma is a simple geometric inequality concerning the norm of the difference between two vectors.
 
   \begin{lemma}
  \label{sincos}
Consider vectors $\vct{x}_1, \vct{x}_2 \in \mathbb{R}^n$ and positive scalars $t_1, t_2, \alpha, \eta \in \mathbb{R}$  satisfying \\

$t_1  \geq \alpha  >  \eta, \quad  \| \vct{x}_1 \|_2^2 + t_1^2 = 1, \quad  \| \vct{x}_2 \|_2^2 + t_2^2 \leq 1, \quad  \emph{and} \quad \| \vct{x}_1 -\vct{x}_2 \|_2^2 + (t_1 - t_2)^2 \leq \eta^2 $.
Then
$$
\left \| \frac{ \vct{x}_1}{t_1} - \frac{ \vct{x}_2}{t_2} \right\|_2^2 \leq \frac{4\eta^2}{\alpha^2(\alpha  - \eta)^2}
$$
 \end{lemma}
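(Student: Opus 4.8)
The plan is to reduce the claim to the triangle inequality after one algebraic rearrangement of $\frac{\vct{x}_1}{t_1} - \frac{\vct{x}_2}{t_2}$, so the real work is just collecting the right one-sided bounds from the four hypotheses. First I would record these elementary consequences: from $\|\vct{x}_1\|_2^2 + t_1^2 = 1$ with $t_1 > 0$ we get $t_1 \le 1$, and hence $\alpha \le t_1 \le 1$; from $\|\vct{x}_1 - \vct{x}_2\|_2^2 + (t_1 - t_2)^2 \le \eta^2$ we get both $\|\vct{x}_1 - \vct{x}_2\|_2 \le \eta$ and $|t_1 - t_2| \le \eta$, so that $t_2 \ge t_1 - \eta \ge \alpha - \eta > 0$ (this is the only place the hypothesis $\alpha > \eta$ enters, and it is what makes dividing by $t_2$ legitimate); and from $\|\vct{x}_2\|_2^2 + t_2^2 \le 1$ we get $\|\vct{x}_2\|_2 \le 1$.

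Next I would write the difference as
\[
\frac{\vct{x}_1}{t_1} - \frac{\vct{x}_2}{t_2} = \frac{\vct{x}_1 - \vct{x}_2}{t_1} + \vct{x}_2\left(\frac{1}{t_1} - \frac{1}{t_2}\right) = \frac{\vct{x}_1 - \vct{x}_2}{t_1} + \frac{(t_2 - t_1)\,\vct{x}_2}{t_1 t_2},
\]
and apply the triangle inequality together with the bounds above:
\[
\left\|\frac{\vct{x}_1}{t_1} - \frac{\vct{x}_2}{t_2}\right\|_2 \le \frac{\|\vct{x}_1 - \vct{x}_2\|_2}{t_1} + \frac{|t_1 - t_2|\,\|\vct{x}_2\|_2}{t_1 t_2} \le \frac{\eta}{\alpha} + \frac{\eta}{\alpha(\alpha - \eta)} = \frac{\eta}{\alpha}\cdot\frac{(\alpha - \eta) + 1}{\alpha - \eta}.
\]
Finally, since $\alpha \le 1$ we have $(\alpha - \eta) + 1 \le 2$, so the right-hand side is at most $\frac{2\eta}{\alpha(\alpha - \eta)}$; squaring yields the claimed bound $\frac{4\eta^2}{\alpha^2(\alpha - \eta)^2}$.

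There is no serious obstacle in this argument; it is essentially bookkeeping. The two points that require a moment's care are (i) noticing that $t_1 \le 1$ is forced by the unit-norm constraint $\|\vct{x}_1\|_2^2 + t_1^2 = 1$, which is exactly what lets the additive $+1$ in $(\alpha-\eta)+1$ be absorbed into the constant factor $2$ (and hence the $4$ in the statement), and (ii) extracting the lower bound $t_2 \ge \alpha - \eta$ so that the second term is both well defined and controlled. If one wanted a marginally tighter constant one could instead bound $\|\vct{x}_2\|_2 \le \sqrt{1 - t_2^2}$, but this is unnecessary for the stated inequality.
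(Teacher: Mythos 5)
Your proof is correct, and it takes a genuinely different route from the paper's. You split $\frac{\vct{x}_1}{t_1} - \frac{\vct{x}_2}{t_2}$ by adding and subtracting $\frac{\vct{x}_2}{t_1}$ and then apply the triangle inequality, needing only the crude facts $\|\vct{x}_1-\vct{x}_2\|_2 \le \eta$, $|t_1-t_2|\le\eta$, $t_1\ge\alpha$, $t_2\ge\alpha-\eta$, $\|\vct{x}_2\|_2\le 1$, and $\alpha\le t_1\le 1$; the last of these absorbs the extra $+1$ into the factor of $2$. The paper instead expands $\bigl\|\frac{\vct{x}_1}{t_1}-\frac{\vct{x}_2}{t_2}\bigr\|_2^2$ via the inner product, introduces the defect $\varepsilon = 1-\|\vct{x}_2\|_2^2-t_2^2$, bounds $-2\langle\vct{x}_1,\vct{x}_2\rangle$ using the proximity hypothesis, and after algebraic cancellation reaches $\frac{4\eta^2}{t_1^2 t_2^2}$, which it then maximizes over the admissible range of $t_1,t_2$. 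Both arguments land on the same constant. Yours is shorter, uses strictly weaker information (you never need $\varepsilon\le 2\eta$, only $\|\vct{x}_2\|_2\le 1$), and so is slightly more general; the paper's direct expansion yields the marginally sharper intermediate bound $\frac{4\eta^2}{t_1^2 t_2^2}$ before the worst-case substitution, but that gain is discarded in the final statement. Your version is a clean, valid replacement.
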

 \begin{proof}
First, define $\varepsilon = 1- \| \vct{x}_2 \|_2^2 - t_2^2.$  By the reverse triangle inequality,
$$\sqrt{\| \vct{x}_1 \|_2^2 + t_1^2} - \sqrt{\big( \| \vct{x}_2 \|_2^2 + t_2^2} \big) \leq \sqrt{\| \vct{x}_1 - \vct{x}_2 \|_2^2 + (t_1 - t_2)^2} \leq \eta.$$
 It follows that
$1 - \sqrt{1-\varepsilon} \leq \eta$, so $(1- \eta)^2 \leq 1- \varepsilon$ and finally $\varepsilon \leq 2 \eta - \eta^2 \leq 2 \eta$.  Also by the reverse triangle inequality, $t_2 \geq t_1 - \eta \geq \alpha- \eta $.  

Next we note that
 $\| \vct{x}_1 -\vct{x}_2 \|^2 + (t_1 - t_2)^2 \leq \eta^2 $ implies
\begin{align}
	-2 \langle \vct{x}_1,\vct{x}_2 \rangle &\leq \eta^2 - (t_1-t_2)^2 - \| \vct{x}_1 \|_2^2 -  \| \vct{x}_2 \|_2^2 \nonumber \\
	&= \eta^2 - (t_1^2 + \| \vct{x}_1 \|_2^2) -  (t_2^2 + \| \vct{x}_2 \|_2^2) + 2 t_1 t_2 \nonumber \\
	&= \eta^2 - 2 + \varepsilon + 2 t_1 t_2. \nonumber 
\end{align}

Now,

\begin{align}
\left\| \frac{ \vct{x}_1}{t_1} - \frac{ \vct{x}_2}{t_2} \right\|_2^2 &=  \frac{ \|\vct{x_1}\|^2}{t_1^2} + \frac{ \|\vct{x_2}\|^2}{t_2^2} - 2 \frac{\langle \vct{x}_1, \vct{x}_2\rangle}{t_1 t_2} \nonumber \\
&\leq \frac{1-t_1^2}{t_1^2} + \frac{1-\varepsilon-t_2^2}{t_2^2} + \frac{\eta^2 - 2 + \varepsilon + 2t_1t_2}{t_1 t_2} \nonumber \\
&= \frac{(t_1-t_2)^2 -\varepsilon t_1^2 + \eta^2 t_1 t_2 +\varepsilon t_1 t_2}{t_1^2 t_2^2} \nonumber \\
&= \frac{(t_1-t_2)^2 }{t_1^2 t_2^2} + \frac{\eta^2 t_1 t_2 }{t_1^2 t_2^2} +\frac{\varepsilon t_1(t_2-t_1)}{t_1^2 t_2^2} \nonumber \\
&\leq \frac{4\eta^2 }{t_1^2 t_2^2} \nonumber
\end{align}
 where in the final inequality we use that $0 < t_1 \leq 1$, $0<t_2 \leq 1$, $t_2 - t_1 < \eta$, and $\varepsilon < 2\eta$.
 Over the range $\alpha \leq t_1 \leq 1$, and  $t_1 - \eta \leq t_2 <1$, this expression attains its maximum at $t_1 =\alpha, ~t_2 = \alpha- \eta$.  Substituting these values for $t_1$ and $t_2$ results in the bound stated in the lemma.
 
  \end{proof}

 The next lemma gives a bound on the variation of a function with an inverse dependence on the Gaussian error function.
 
 \begin{lemma}
Let $\erf: \mathbb{R} \rightarrow [-1,1]$ be the Gaussian error function, $\erf(x) = \frac{2}{\sqrt{\pi}} \int_{0}^x \exp(-t^2) dt,$ and define $h:(0,1) \to \mathbb{R}$ by 
$ h(u) =  \frac{1}{\erf^{-1}(2u-1)}$.  For $\eta > 0$ and $a,b \in \left[\frac{1}{2}+\eta, \frac{1}{2}(\erf(1) + 1) \right)$, we have $\left| h(a) -h(b) \right| \leq \left| h'(\frac{1}{2} + \eta) (b-a) \right|$.
\label{hlemma}
 \end{lemma}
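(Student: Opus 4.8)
The plan is to show that $h$ is smooth on the half-open interval $I := \bigl[\tfrac12+\eta,\ \tfrac12(\erf(1)+1)\bigr)$ and that $|h'|$ attains its maximum over $I$ at the left endpoint $\tfrac12+\eta$. Granting this, the claim is immediate from the mean value theorem: for $a,b\in I$ the segment between them lies in $I$, where $h$ is differentiable, so $h(a)-h(b)=h'(\xi)(a-b)$ for some $\xi$ between $a$ and $b$, and hence $|h(a)-h(b)|=|h'(\xi)|\,|b-a|\le \bigl|h'(\tfrac12+\eta)\bigr|\,|b-a|$.

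First I would compute $h'$ explicitly. Differentiating the identity $\erf(\erf^{-1}(y))=y$ and using $\erf'(t)=\tfrac{2}{\sqrt\pi}e^{-t^2}$ gives $(\erf^{-1})'(y)=\tfrac{\sqrt\pi}{2}\exp\!\bigl((\erf^{-1}(y))^2\bigr)$, valid on $(-1,1)$. For $u\in I$ we have $2u-1\in[2\eta,\erf(1))\subset(0,1)$, so $z=z(u):=\erf^{-1}(2u-1)$ is a strictly positive, smooth, strictly increasing function of $u$ taking values in $[\erf^{-1}(2\eta),1)$; in particular $h(u)=1/z(u)$ is well defined and smooth on $I$. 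The chain rule then yields
\[
h'(u) \;=\; -\frac{2\,(\erf^{-1})'(2u-1)}{z^2} \;=\; -\sqrt\pi\,\frac{e^{z^2}}{z^2},
\qquad\text{so}\qquad |h'(u)| \;=\; \sqrt\pi\,\phi(z), \quad \phi(z):=\frac{e^{z^2}}{z^2}.
\]

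It remains to check that $\phi$ is decreasing on $(0,1)$, since then $|h'(u)|=\sqrt\pi\,\phi(z(u))$ is decreasing in $u$ (as $z(u)$ is increasing with values in $(0,1)$), hence maximized over $I$ at $u=\tfrac12+\eta$. A direct computation gives
\[
\phi'(z) \;=\; \frac{2\,e^{z^2}\,(z^2-1)}{z^3},
\]
which is negative for $0<z<1$, so $\phi$ is strictly decreasing there, completing the argument. The only delicate point is bookkeeping: one must make sure the argument $z=\erf^{-1}(2u-1)$ stays inside $(0,1)$ throughout — which is exactly what the hypotheses $\eta>0$ and $b<\tfrac12(\erf(1)+1)$ guarantee — because the monotonicity of $\phi$, and with it the whole estimate, breaks down once $z\ge 1$.
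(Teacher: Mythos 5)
Your proposal is correct and follows essentially the same route as the paper: compute $h'(u) = -\sqrt{\pi}\,e^{z^2}/z^2$ with $z=\erf^{-1}(2u-1)$, observe that $|h'|$ is decreasing on the interval (since $z\in(0,1)$ there), and conclude via the mean value theorem. The only difference is that you verify the monotonicity of $|h'|$ explicitly by differentiating $e^{z^2}/z^2$, whereas the paper simply asserts it; your added check is a welcome piece of rigor, not a deviation.
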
 

 \begin{proof}
As the derivative of the inverse error function is $\frac{d}{du} \erf^{-1}(u) = \frac{1}{2} \sqrt{\pi} \exp{\Big(\big(\erf^{-1}(u)\big) ^2}\Big)$, the derivative of $h$ is given by 
$$h'(u) =\frac{-\sqrt{\pi} \exp{\left( [\erf^{-1}(2u-1)]^2 \right)}}{[\erf^{-1}(2u-1)]^2},$$ 
which is negative and decreasing in absolute value on the interval $\left(\frac{1}{2}, \frac{1}{2}(\erf(1) + 1) \right)$.  Thus, for any $\eta >0$ and $a,b$ in $\left[\frac{1}{2}+\eta, \frac{1}{2}(\erf(1) + 1)\right)$, we have $|h(a) -h(b)| \leq |h'(\frac{1}{2}+ \eta)| |b-a|$. 
 \end{proof}
 
 \section{Main results}
 
 Here we describe and give guarantees for two methods by which the norm of an unknown vector $\vct{x} \in \mathbb{R}^n$ can estimated, possibly along with the direction, from one-bit compressive measurements $y_i = \sign( \scalprod{\vct{a}_i, \vct{x}} + b_i).$   The first method augments the convex program \eqref{prop131} to retrieve norm as well as directional information about the unknown vector, and inherits the error guarantees for that approach.  The second method estimates the norm directly from the measured proportion \\
 $\#\{i : y_i = -1\} / m$, using the shift $b_i = \tau$ and the Gaussianity of $\vct{a}_i$, to obtain a consistent estimator for $\| \vct{x} \|_2$ which is analyzed using the Dvoretzky-Kiefer-Wolfowitz inequality.  This method is very efficient to implement compared to the convex programming approach, requiring only a single evaluation of the inverse Gaussian error function.  At the same time, it is less robust to parameter uncertainty, as can be seen from numerical experiments.
 
{\subsection{Augmented convex programming}\label{sec:alga}}

\noindent Our first main result is a bound on the accuracy of approximating $\vct{x} \in \mathbb{R}^n$ with assumed structural constraint $\| \vct{x} \|_1 / \| \vct{x} \|_2 \leq \sqrt{s}$  from affine one-bit measurements
\begin{equation}
y_i = \sign( \scalprod{\vct{a}_i, \vct{x}} + b_i), \quad i = 1,2, \dots, m
\end{equation}
where $b_i$ are independent ${\cal N}(0,\tau^2)$ scalars.
For reconstruction, we consider the following augmented version of the convex program \eqref{1bitopt}:  

 \begin{align}
\label{1bitoptnorm}
\min_{\vct{z} \in \mathbb{R}^n, u \in \mathbb{R}} & \quad \|(\vct{z},u)\|_1   
~  \text{ subject to } \left\{ \begin{array}{ll} &\sum_{i=1}^m |\langle\vct{a}_i,\vct{z} \rangle + \frac{u}{\tau} b_i| =m,  \\ &
 \sign({\langle \vct{a}_i,\vct{z} \rangle}+ \frac{u}{\tau} b_i) =\sign({\langle \vct{a}_i,\vct{x}\rangle} + b_i), \quad i \in [m].
\end{array}
\right.
 \end{align}
 
The intuition is that \eqref{1bitoptnorm} is equivalent to running the optimization problem \eqref{1bitopt} on the augmented vector $(\vct{x}, \tau)$. 
Since the constraint $\| \vct{x} \|_1 / \| \vct{x} \|_2 \leq \sqrt{s}$ implies that $\| (\vct{x}, \tau) \|_1 / \| (\vct{x}, \tau) \|_2 \leq \sqrt{s+1},$ the assumptions of Theorem \ref{prop131} are in force and we can use it to show that the estimate $\tau \vct{x^\sharp}/t^\sharp$
obtained from the optimum $(\vct{x}^{\sharp}, t^{\sharp})$ of \eqref{1bitoptnorm} is sufficiently close to $\vct{x}$.

 \begin{theorem}
\label{normest1}  
Fix $\tau, R$, and $\delta > 0$ such that $\delta <  \min\{1,\tau/2 \}$. 
For $i=1,...,m$, let the random vectors $\vct{a}_i \in \R^n$ be independent and identically distributed with ${\cal N}(0,1)$ entries and let $b_i$ be  independent ${\cal N}(0,\tau^2)$ scalars.   If 
$$  m \geq C \left(\frac{\sqrt{R^2+\tau^2}}{\delta}\right)^{5} s\log^2\left( \frac{2n}{s} \right), $$
then with probability exceeding 
$1-C\exp{\left(-\frac{c\delta}{\sqrt{R^2 + \tau^2}}m\right)}$
the following holds uniformly for all vectors
$\vct{x} \in \mathbb{R}^n$ with $\| \vct{x} \|_1 \leq \sqrt{s} \| \vct{x} \|_2$ and $\|\vct{x}\|_2 \leq R$:  the estimate $\frac{\tau \vct{x^\sharp}}{t^\sharp}$
obtained from the solution $(\vct{x}^{\sharp}, t^{\sharp})$ to the optimization problem \eqref{1bitoptnorm} satisfies
$$\| \frac{\tau \vct{x^\sharp}}{t^\sharp} - \vct{x}\|_2 \leq  \frac{4\sqrt{R^2+\tau^2}}{\tau}\delta.
$$
\end{theorem}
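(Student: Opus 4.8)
The plan is to reduce the problem to the linear one-bit setting of Theorem~\ref{prop131} in ambient dimension $n+1$, and then convert the resulting directional error into a full (norm-carrying) reconstruction error via Lemma~\ref{sincos}.

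First I would set $\tilde{\vct{a}}_i = (\vct{a}_i, b_i/\tau) \in \R^{n+1}$, which has i.i.d.\ $\mathcal{N}(0,1)$ entries since $b_i/\tau \sim \mathcal{N}(0,1)$, and $\tilde{\vct{x}} = (\vct{x},\tau)\in\R^{n+1}$, so that $\scalprod{\vct{a}_i,\vct{x}} + b_i = \scalprod{\tilde{\vct{a}}_i,\tilde{\vct{x}}}$ and $y_i = \sign(\scalprod{\tilde{\vct{a}}_i,\tilde{\vct{x}}})$. In the variables $(\vct{z},u)$ the program \eqref{1bitoptnorm} is then literally the program \eqref{1bitopt} run on the vectors $\tilde{\vct{a}}_i$ with hidden signal $\tilde{\vct{x}}$: the $\ell_1$ objective $\|(\vct{z},u)\|_1$, the normalization $\sum_i |\scalprod{\vct{a}_i,\vct{z}} + \tfrac{u}{\tau}b_i| = \sum_i|\scalprod{\tilde{\vct{a}}_i,(\vct{z},u)}| = m$, and the sign constraints all match. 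A Cauchy--Schwarz step gives $\|\tilde{\vct{x}}\|_1 = \|\vct{x}\|_1 + \tau \le \sqrt{s}\|\vct{x}\|_2 + \tau \le \sqrt{s+1}\sqrt{\|\vct{x}\|_2^2+\tau^2} = \sqrt{s+1}\,\|\tilde{\vct{x}}\|_2$, so the effective sparsity is $s+1$, and $\|\tilde{\vct{x}}\|_2\in[\tau,\sqrt{R^2+\tau^2}]$ since $\|\vct{x}\|_2\le R$. I would then apply Theorem~\ref{prop131} with accuracy parameter $\delta' := \delta/\sqrt{R^2+\tau^2}$, dimension $n+1$ and sparsity $s+1$; as its guarantee holds simultaneously for all signals with $\ell_1/\ell_2$ ratio at most $\sqrt{s+1}$, it holds uniformly over all $\tilde{\vct{x}} = (\vct{x},\tau)$ with $\vct{x}$ in our class, on an event of probability at least $1 - C\exp(-c\delta' m) = 1 - C\exp(-\tfrac{c\delta}{\sqrt{R^2+\tau^2}}m)$, as soon as $m \gtrsim (\delta')^{-5}(s+1)\log^2(2(n+1)/(s+1))$; this matches the stated bound on $m$ after absorbing $s\mapsto s+1$ and $n\mapsto n+1$ into $C$.

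On that event, let $(\vct{x}^\sharp,t^\sharp)\ne 0$ be the optimizer; Theorem~\ref{prop131} then says the unit vector $(\vct{x}^\sharp,t^\sharp)/\|(\vct{x}^\sharp,t^\sharp)\|_2$ lies within $\ell_2$-distance $\delta'$ of $\tilde{\vct{x}}/\|\tilde{\vct{x}}\|_2$. I now invoke Lemma~\ref{sincos} with $\vct{x}_1 = \vct{x}/\|\tilde{\vct{x}}\|_2$, $t_1 = \tau/\|\tilde{\vct{x}}\|_2$, $\vct{x}_2 = \vct{x}^\sharp/\|(\vct{x}^\sharp,t^\sharp)\|_2$, $t_2 = t^\sharp/\|(\vct{x}^\sharp,t^\sharp)\|_2$, $\alpha = \tau/\sqrt{R^2+\tau^2}$ and $\eta = \delta'$: the hypotheses $t_1\ge\alpha$ (as $\|\vct{x}\|_2\le R$), $\|\vct{x}_1\|_2^2+t_1^2=1$, $\|\vct{x}_2\|_2^2+t_2^2=1$, $\|\vct{x}_1-\vct{x}_2\|_2^2+(t_1-t_2)^2\le(\delta')^2$ all hold, and $\alpha>\delta'$ because $\delta<\tau/2<\tau$. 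Since the estimate $\tau\vct{x}^\sharp/t^\sharp$ is invariant under rescaling of $(\vct{x}^\sharp,t^\sharp)$ it equals $\tau\vct{x}_2/t_2$, while $\vct{x} = \tau\vct{x}_1/t_1$; hence $\|\tau\vct{x}^\sharp/t^\sharp - \vct{x}\|_2 = \tau\|\vct{x}_2/t_2 - \vct{x}_1/t_1\|_2 \le \tau\cdot\tfrac{2\delta'}{\alpha(\alpha-\delta')}$ by Lemma~\ref{sincos}. Substituting the values of $\alpha$ and $\delta'$, the right-hand side collapses to $\tfrac{2\delta\sqrt{R^2+\tau^2}}{\tau-\delta}$, and $\tau-\delta>\tau/2$ yields the claimed bound $\tfrac{4\sqrt{R^2+\tau^2}}{\tau}\delta$.

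The argument is short; the only real points of care are (i) checking that \eqref{1bitoptnorm} transforms exactly into an instance of \eqref{1bitopt} under the change of variables, so Theorem~\ref{prop131} applies verbatim, and (ii) choosing $\delta'$ so that the $\alpha(\alpha-\delta')$ denominator produced by Lemma~\ref{sincos} collapses to the advertised constant — this is what forces $\delta' = \delta/\sqrt{R^2+\tau^2}$ and consumes the hypothesis $\delta<\tau/2$. The uniformity over $\vct{x}$, the exponent $5$ in the sample complexity, and the exponential failure probability are all inherited directly from Theorem~\ref{prop131}, so I do not expect any further obstacle.
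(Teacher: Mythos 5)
Your proposal is correct and follows essentially the same route as the paper's own proof: rewrite \eqref{1bitoptnorm} as an instance of \eqref{1bitopt} for the augmented signal $(\vct{x},\tau)$ and Gaussian vectors $(\vct{a}_i, b_i/\tau)$, apply Theorem~\ref{prop131} at accuracy $\eta=\delta/\sqrt{R^2+\tau^2}$ using the $\sqrt{s+1}$ effective-sparsity bound, and then convert the directional error via Lemma~\ref{sincos} with $\alpha=\tau/\sqrt{R^2+\tau^2}$, using $\delta<\tau/2$ to clean up the $\alpha-\eta$ denominator. The only differences are cosmetic (you verify the change of variables and the Cauchy--Schwarz step slightly more explicitly).
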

 
 \begin{proof}
 First, observe that running the optimization problem \eqref{1bitoptnorm} with $b_i \sim {\cal N}(0, \tau^2)$ is equivalent to applying the optimization problem \eqref{1bitopt} to the augmented vector $(\vct{x}, \tau)$ with $\tilde{\vct{a}}_i \in \mathbb{R}^{n+1}$ as measurement vectors with i.i.d. standard Gaussian entries.  Since the constraint $\| \vct{x} \|_1 / \| \vct{x} \|_2 \leq \sqrt{s}$ implies that $\| (\vct{x}, \tau) \|_1 / \| (\vct{x}, \tau) \|_2 \leq \sqrt{s+1},$ we may then apply Theorem \ref{prop131} with $ m \geq C \eta^{-5} s\log^2\left( \frac{2n}{s} \right)  \geq C' \eta^{-5}(s+1) \log^2\big({\frac{2(n+1)}{s+1}}\big).$  Then
 \[
 \left\| \frac{(\vct{x},\tau)}{\sqrt{\|\vct{x}\|_2^2+\tau^2}} - \frac{(\vct{x}^\sharp, t^\sharp)}{\sqrt{\| \vct{x}^\sharp \|_2^2 + {t^\sharp} ^2 }}\right\|_2 \leq \eta
 \] with  probability exceeding $1-C\exp{(-c\eta m)}$, uniformly for all $\vct{x}$ satisfying the assumptions of the theorem. 
To finish the proof, we apply Lemma \ref{sincos}.  Since $\| \vct{x} \|_2 \leq R$ and $\delta < \tau/2$ by assumption, one easily checks that the following parameters satisfy the assumptions of Lemma \ref{sincos}: 
$\eta = \frac{\delta}{\sqrt{R^2 + \tau^2}},\\
 \alpha = \frac{\tau}{\sqrt{R^2 +\tau^2}}, \quad t_1 = \frac{\tau}{\sqrt{\| \vct{x} \|_2^2 +\tau^2}}, \quad \vct{x}_1 = \frac{\vct{x}}{\sqrt{\| \vct{x} \|_2^2 +\tau^2}}, \quad \vct{x}_2 = \frac{{\vct{x}}^\sharp}{\sqrt{\| {\vct{x}}^\sharp\|^2 + (t^{\sharp})^2}}$, and $t_2 = \frac{t^\sharp}{\sqrt{\|\vct{x}^\sharp\|^2 + (t^{\sharp})^2}}$.  Lemma \ref{sincos} gives
\begin{align*}\| \vct{x}-\tau \vct{x^\sharp}/t^\sharp \|_2^2 &= \tau^2\| \vct{x}/\tau- \vct{x^\sharp}/t^\sharp \|_2^2 \leq \tau^2 \frac{4\eta^2}{\alpha^2(\alpha - \eta)^2} \\
&= \frac{4\delta^2 (R^2 + \tau^2)}{ (\tau - \delta)^2} \leq \frac{16\delta^2 (R^2 + \tau^2)}{\tau^2} 
. 
\end{align*}
To obtain the last two inequalities above, we used the assumption $\delta < \tau/2$. 
 \end{proof}

 \noindent A few remarks are in order.

\begin{remark}[Known upper bound on $\| \vct{x} \|_2$]
If an upper bound $R$ on the norm $\| \vct{x} \|_2$ is known a priori, then one may set $\tau = R$ in the theorem to obtain the simplified error estimate $\|\vct{x}-R \vct{x^\sharp}/t^\sharp \|_2 \leq 4 \sqrt{2}\delta$.
\end{remark}
 
\begin{remark}[Tightness]
For fixed $n$, $s$, and $R$, the parameter $\lambda:=\delta^{-5}$ plays the role of an oversampling parameter and appears in the rate of decay of the reconstruction error as $\|\vct{x}-R\vct{x^\sharp}/t^\sharp \|_2 \lesssim \lambda^{-1/5}$. Compared to the known lower bound of $\|\vct{x}-\vct{x^\sharp}/t^\sharp \|_2 \gtrsim \lambda^{-1}$ for the one-bit compressive sensing problem in the case $\| \vct{x} \|_2 = 1$ and $\| \vct{x} \|_0 \leq s$, this rate is suboptimal \cite{jacques2011}.  On the other hand, this rate matches the error rate achievable using the convex optimization method \eqref{1bitopt}.

\end{remark}
 
 \begin{remark}[Alternative reconstruction methods]
The above theorem can be easily adapted to alternate reconstruction methods and inherits their associated error decay rates. For example, using the non-uniform recovery method of \cite{Plan12}, one obtains an error of $\delta$ at number of measurements $m\gtrsim \delta^{-4}R^4 s \log{n/s} $. This improves the dependence of the number of measurements on $\delta$, $R$, and $\log{n}$ at the expense of losing the uniform recovery guarantee.
\end{remark}

\subsection{Estimating $\| \vct{x} \|_2$ using the empirical distribution function}\label{sec:algb}

 In this section, we consider an alternate approach to one-bit compressive sensing with built-in norm estimation,  where now we estimate  $\| \vct{x}\|_2$ given measurements $\vct{y} = \sign({\mathbf{Ax-\boldsymbol{b}})}$ with constant (non-random) $\boldsymbol{b} = \boldsymbol{\tau} = (\tau,...,\tau) \in \mathbb{R}^m$  and $\mathbf{\tau} \neq 0$.    Unlike the previous approach, the method in this section only approximates the norm of $\vct{x}$, and gives no information about its direction. However, when combined with an estimate of $\vct{x}/\|\vct{x}\|_2$, an estimate of $\vct{x}$ can be recovered, as we show in Corollary \ref{combined}.

We consider $m$ measurement vectors $\vct{a}_i  \in \mathbb{R}^n $ whose entries $a_{i,j}$ are i.i.d. ${\cal N}(0,1)$.   Note that $\scalprod{ \vct{a}_i , \vct{x}} \sim {\cal N}(0, \|\vct{x}\|_2^2)$, and so  $\| \vct{x} \|_2$ is the standard deviation of $\scalprod{ \vct{a}_i , \vct{x}}.$  Since we only have access to the signs of the samples $\scalprod{ \vct{a}_i , \vct{x}} - \tau$, and not the samples themselves, we cannot simply estimate $\| \vct{x} \|_2$ via the sample standard deviation of $\{ \scalprod{ \vct{a}_i , \vct{x}} \}_{i=1}^m $.   Instead, we will make use of the \emph{empirical cumulative distribution function} defined by 
\begin{equation}F_m(\tau) := \frac{ \sharp \{i :y_i = -1\}}{m},\label{eq:Fm_tau}\end{equation}
which gives the proportion of the $m$ measurements $\{\scalprod{ \vct{a}_i , \vct{x}} \}_{i=1}^m $ satisfying  $\scalprod{ \vct{a}_i , \vct{x}} \leq \tau$. As $m$ increases, the random variable $F_m(\tau) $ will approach $F( \tau)= \frac{1}{2} ( 1 + \mbox{erf} (\frac {\tau}{\| \vct{x} \|_2 \sqrt{2}}) )$, where  $F$ is the cumulative distribution function of a Gaussian random variable with mean 0 and variance $\| \vct{x}\|^2_2$.  Indeed, the empirical distribution function $F_m(\tau)$ is a consistent estimator of $F(\tau)$.  We note that for $F(\tau) \neq \frac{1}{2}$, we may invert the expression for $F(\tau)$ to get $\| \vct{x}\|_2  = \frac{\tau}{\sqrt{2} \erf^{-1}(2F(\tau)-1)}$, which motivates, as an approximation of $\| \vct{x} \|_2$, the estimator 
\begin{equation}
\Lambda = \Lambda_m(\tau):=\frac{\tau}{\sqrt{2} \erf^{-1}(2F_m(\tau)-1)} \label{eq:hatR}.
\end{equation}
To help estimate the accuracy of $\Lambda$ as an approximation to $\| \vct{x} \|_2$, we turn to the Dvoretzky-Kiefer-Wolfowitz Inequality \cite{Dvoretzky1956}, which gives the following quantitative bound on the difference between a general cumulative distribution function and empirical cumulative distribution function. 

\begin{theorem}[Dvoretzky-Kiefer-Wolfowitz \cite{Dvoretzky1956}]
Let $X_1, X_2, ... , X_m$ be i.i.d. random variables with cumulative distribution function $F(\cdot)$, and let $F_m(\cdot)$ be the associated empirical cumulative density function $F_m(\tau):= \frac{1}{m}\sum 1_{X_i \leq \tau}$.  Then for any $\gamma > 0$,
$${\emph{Prob}} \hspace{.5mm} \left( \sup_\tau | F_m(\tau) -  F(\tau)| > \gamma \right) \leq  2\exp{(-2 m \gamma^2)}.$$
\label{DKW}
\end{theorem}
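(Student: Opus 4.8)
The plan is to derive the sharp two-sided bound by first normalizing to a uniform distribution, then splitting the two-sided event into two identical one-sided events (which is exactly where the prefactor $2$ originates), and finally establishing the sharp \emph{one-sided} inequality $\P\big(\sup_t(F_m(t)-F(t))>\gamma\big)\le e^{-2m\gamma^2}$ with optimal constant $1$. If $F$ is continuous, the quantile transform $U_i=F(X_i)$ yields i.i.d.\ $\mathrm{Uniform}[0,1]$ variables whose empirical cdf $G_m$ satisfies $\sup_\tau|F_m(\tau)-F(\tau)|=\sup_{t\in[0,1]}|G_m(t)-t|$, since $F$ is nondecreasing and the supremum of the empirical process is attained at the images of the data points. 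For a general, possibly discontinuous $F$, the coupling $X_i=F^{-1}(U_i)$ through the generalized inverse makes $\sup_\tau|F_m(\tau)-F(\tau)|$ stochastically dominated by $\sup_{t\in[0,1]}|G_m(t)-t|$; hence it suffices to treat $F(t)=t$ on $[0,1]$.

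Next comes the symmetry reduction that produces the factor $2$. Writing
\[
\Big\{\sup_t|G_m(t)-t|>\gamma\Big\}\subseteq\Big\{\sup_t(G_m(t)-t)>\gamma\Big\}\cup\Big\{\sup_t(t-G_m(t))>\gamma\Big\},
\]
the reflection $t\mapsto 1-t$ (equivalently $U_i\mapsto 1-U_i$) shows that $D_m^+:=\sup_t(G_m(t)-t)$ and $D_m^-:=\sup_t(t-G_m(t))$ are identically distributed, up to the measure-zero effect of ties. A union bound therefore reduces everything to the single one-sided tail $\P(D_m^+>\gamma)\le e^{-2m\gamma^2}$. It is essential that the one-sided constant be exactly $1$: any slack here would push the final constant above $2$. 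Note also that whenever $2e^{-2m\gamma^2}\ge 1$, i.e.\ $m\gamma^2\le\tfrac12\ln 2$, the asserted two-sided bound is automatic, so the real content concerns $\gamma$ with $m\gamma^2>\tfrac12\ln 2$.

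To set up the one-sided estimate I would use the uniform order statistics $U_{(1)}\le\cdots\le U_{(m)}$ and the first-passage representation $D_m^+=\max_{1\le k\le m}\big(\tfrac{k}{m}-U_{(k)}\big)$, so that $\{D_m^+\ge\gamma\}$ is precisely the event that the empirical cdf crosses the line $t+\gamma$. As a benchmark, for a single fixed $t$ one has $mG_m(t)\sim\mathrm{Binomial}(m,t)$, and Hoeffding's inequality already gives the sharp pointwise tail $\P(G_m(t)-t>\gamma)\le e^{-2m\gamma^2}$; the phenomenon behind the sharp DKW bound, due to Massart, is that passing to the supremum over $t$ costs nothing in the prefactor.

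Establishing this sharp one-sided bound is the main obstacle, and I see two viable routes. The first uses the exact Birnbaum--Tingey/Smirnov formula
\[
\P(D_m^+\ge\gamma)=\gamma\sum_{j=0}^{\lfloor m(1-\gamma)\rfloor}\binom{m}{j}\Big(\gamma+\tfrac{j}{m}\Big)^{j-1}\Big(1-\gamma-\tfrac{j}{m}\Big)^{m-j},
\]
and then bounds this finite sum analytically by $e^{-2m\gamma^2}$. The second Poissonizes the sample---replacing $m$ by an independent $\mathrm{Poisson}(m)$ count to obtain independent increments---controls the boundary crossing of the resulting process by an exponential supermartingale together with Doob's maximal inequality calibrated to the exponent $2m\gamma^2$, and then de-Poissonizes while preserving the inequality. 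In both routes the entire difficulty lies in pinning the \emph{constant}: the classical Dvoretzky--Kiefer--Wolfowitz argument delivers $Ce^{-2m\gamma^2}$ with a large, unoptimized $C$ rather easily, whereas forcing $C=1$ in the one-sided tail requires Massart's delicate analysis of the borderline regime $m\gamma^2\gtrsim 1$. Granting the sharp one-sided inequality, the two identical one-sided tails combine through the union bound of the second step to yield $\P\big(\sup_\tau|F_m(\tau)-F(\tau)|>\gamma\big)\le 2e^{-2m\gamma^2}$, completing the proof.
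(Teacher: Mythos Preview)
The paper does not prove this theorem. It is stated as a cited result, attributed to Dvoretzky--Kiefer--Wolfowitz \cite{Dvoretzky1956}, and used as a black box in the proofs of Lemma~\ref{lem:helper} and Theorem~\ref{normest2}. There is nothing to compare your proposal against.

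That said, your outline is a correct sketch of the standard route to the sharp inequality, and your attribution is more accurate than the paper's: the original 1956 paper of Dvoretzky, Kiefer, and Wolfowitz gives the bound $C e^{-2m\gamma^2}$ with an unspecified constant $C$, while the sharp constant $C=2$ stated here is Massart's 1990 refinement. Your reduction to the uniform case via the quantile transform, the symmetry argument yielding the factor $2$, and the identification of the hard step as the one-sided bound $\P(D_m^+>\gamma)\le e^{-2m\gamma^2}$ are all correct. You also correctly flag that pinning the constant to $1$ in the one-sided tail is where all the real work lies, and that you have not actually carried it out---both of your proposed routes (analytic bounding of the Birnbaum--Tingey formula, or Poissonization plus a maximal inequality) are plausible but your write-up stops short of executing either. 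For the purposes of this paper, however, no proof is expected; the inequality is simply invoked.
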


The DKW inequality will allow us to bound the accuracy of $\Lambda_m(\tau)$ as an estimate of $\| \vct{x} \|_2$  in Theorem \ref{normest2}. We will first need the following lemma. 
\begin{lemma}\label{lem:helper}
Fix $0<\delta <1/5$, and let $\vct{x} \in \mathbb{R}^n$ be such that $r \leq \|\vct{x}\|_2 \leq R$ for known positive constants $r$ and $R$.   Let $\mtx{A} \in \mathbb{R}^{m \times n}$ be a matrix with independent identically distributed ${\cal N}(0,1)$ entries.  Set $\tau=r,$ set $\boldsymbol{\tau}=( \tau, \tau,..., \tau),$ and compute $F_m(\tau)$ from  $\vct{y} = \sign (\vct{Ax} - \boldsymbol{\tau})$ via \eqref{eq:Fm_tau}.
If $$m \geq \pi e \frac{R^2}{r^2}\delta^{-2} \log{\frac{2}{\varepsilon}},$$
 then with probability at least $ 1- \varepsilon$ it holds that $$|F(\tau) - F_m(\tau) | < \frac{\delta}{\sqrt{2\pi}} \frac{r}{R}$$ and 
$$F(\tau) \text{ and } F_m(\tau) \in  \left [\frac{1}{2}\left(1+\erf\left(\frac{(1-\delta)r}{\sqrt{2}R}\right)\right) ~,~ \frac{1}{2}\Big(1+\erf\left(1\right)\Big)\right ].$$

\end{lemma}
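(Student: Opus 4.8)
The plan is to recognize $F_m(\tau)$ as the empirical cumulative distribution function, evaluated at $\tau$, of the i.i.d.\ scalars $X_i \defby \scalprod{\vct{a}_i,\vct{x}} \sim \mathcal{N}(0,\|\vct{x}\|_2^2)$: since $y_i=-1$ precisely when $X_i<\tau$, we have $F_m(\tau)=\frac1m\sum_i \mathbf 1_{X_i<\tau}$, while the true distribution function of the $X_i$ at $\tau=r$ is $F(\tau)=\tfrac12\bigl(1+\erf(\tfrac{r}{\sqrt2\|\vct{x}\|_2})\bigr)$. I then invoke the Dvoretzky--Kiefer--Wolfowitz inequality (Theorem~\ref{DKW}). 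The one real choice is the DKW tolerance: rather than the ``obvious'' value $\frac{\delta}{\sqrt{2\pi}}\tfrac rR$, I would run DKW with the \emph{smaller}
\[
\gamma \defby \frac{\delta}{\sqrt{2\pi e}}\cdot\frac rR ,
\]
because a one-line computation shows that the hypothesis $m\ge \pi e\,\tfrac{R^2}{r^2}\delta^{-2}\log\tfrac2\varepsilon$ is exactly equivalent to $2\exp(-2m\gamma^2)\le\varepsilon$. Hence DKW gives an event $\mathcal E$ with $\P(\mathcal E)\ge 1-\varepsilon$ on which $\sup_t|F_m(t)-F(t)|\le\gamma$; in particular $|F_m(\tau)-F(\tau)|\le\gamma<\frac{\delta}{\sqrt{2\pi}}\tfrac rR$, which is the first assertion. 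Everything below is deterministic and holds on $\mathcal E$.

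The location of $F(\tau)$ is immediate from monotonicity of $\erf$: since $\tau=r$ and $r\le\|\vct{x}\|_2\le R$, the argument satisfies $\tfrac{r}{\sqrt2R}\le\tfrac{r}{\sqrt2\|\vct{x}\|_2}\le\tfrac1{\sqrt2}<1$, so
\[
\tfrac12\Bigl(1+\erf\bigl(\tfrac{r}{\sqrt2R}\bigr)\Bigr)\ \le\ F(\tau)\ \le\ \tfrac12\Bigl(1+\erf\bigl(\tfrac1{\sqrt2}\bigr)\Bigr)\ <\ \tfrac12\bigl(1+\erf(1)\bigr),
\]
and the left-hand side dominates $\tfrac12\bigl(1+\erf(\tfrac{(1-\delta)r}{\sqrt2R})\bigr)$ because $1-\delta<1$; thus $F(\tau)$ lies in the claimed interval.

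For $F_m(\tau)$ I combine $|F_m(\tau)-F(\tau)|\le\gamma$ with the two bounds above. The upper side is soft: $F_m(\tau)\le F(\tau)+\gamma\le\tfrac12\bigl(1+\erf(\tfrac1{\sqrt2})\bigr)+\gamma$, and since $\erf(1)-\erf(\tfrac1{\sqrt2})=\tfrac2{\sqrt\pi}\int_{1/\sqrt2}^{1}e^{-t^2}\,dt\ge\tfrac2{\sqrt\pi}e^{-1}\bigl(1-\tfrac1{\sqrt2}\bigr)$ while $\gamma<\tfrac{1}{5\sqrt{2\pi e}}$, one verifies numerically that $2\gamma<\erf(1)-\erf(\tfrac1{\sqrt2})$, giving $F_m(\tau)<\tfrac12(1+\erf(1))$. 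The lower side is the crux, and the reason the hypothesis carries the extra factor $e$: writing $c\defby\tfrac{r}{\sqrt2R}\in(0,\tfrac1{\sqrt2}]$ (so $c^2\le\tfrac12$),
\[
\tfrac12\Bigl(\erf(c)-\erf\bigl((1-\delta)c\bigr)\Bigr)=\frac1{\sqrt\pi}\int_{(1-\delta)c}^{c}e^{-t^2}\,dt\ \ge\ \frac{e^{-c^2}}{\sqrt\pi}\,\delta c\ \ge\ \frac{e^{-1/2}}{\sqrt\pi}\,\delta c=\frac{\delta}{\sqrt{2\pi e}}\cdot\frac rR=\gamma .
\]
Therefore $F_m(\tau)\ge F(\tau)-\gamma\ge\tfrac12(1+\erf(c))-\gamma\ge\tfrac12\bigl(1+\erf((1-\delta)c)\bigr)$, which is exactly the required lower endpoint. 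Together with the previous paragraph this puts both $F(\tau)$ and $F_m(\tau)$ in the stated interval on $\mathcal E$, and the proof is complete.

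The step I expect to be the main obstacle is this last calibration. With the naive tolerance $\gamma=\frac{\delta}{\sqrt{2\pi}}\tfrac rR$ the bound $F_m(\tau)\ge\tfrac12\bigl(1+\erf(\tfrac{(1-\delta)r}{\sqrt2R})\bigr)$ genuinely fails, because $\int_{(1-\delta)c}^{c}e^{-t^2}\,dt<\delta c$; one must shrink $\gamma$ by the factor $\sqrt e$ --- equivalently inflate the sample size by the $e$ in the hypothesis --- after which $\gamma\le\tfrac12\bigl(\erf(c)-\erf((1-\delta)c)\bigr)$ holds, with equality in the extreme case $r=R$. The hypothesis $\delta<1/5$ plays only the minor role of keeping $\gamma$ small enough that the easy upper bound $F_m(\tau)<\tfrac12(1+\erf(1))$ survives the additive DKW error.
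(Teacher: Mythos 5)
Your proof is correct and takes essentially the same route as the paper's: apply DKW at $\tau=r$ with a tolerance calibrated through the bound $(b-a)\tfrac{2}{\sqrt{\pi}}e^{-b^2}\le \erf(b)-\erf(a)\le (b-a)\tfrac{2}{\sqrt{\pi}}e^{-a^2}$. The only (cosmetic) difference is that the paper defines $\gamma$ to be the increment $\tfrac12\bigl(\erf(\tfrac{r}{\sqrt2 R})-\erf(\tfrac{(1-\delta)r}{\sqrt2 R})\bigr)$ and sandwiches it between $\tfrac{\delta r}{\sqrt{2\pi e}R}$ and $\tfrac{\delta r}{\sqrt{2\pi}R}$, whereas you fix $\gamma=\tfrac{\delta r}{\sqrt{2\pi e}R}$ and verify it is dominated by that increment --- the same calculation read in the opposite direction.
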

\proof
By Theorem \ref{DKW}, we have for any choice of $\gamma>0$ , that $|F(\tau) - F_m(\tau)| \leq \gamma$ with probability at least $1-2 \exp{(-2m\gamma^2)}$.  Set $\tau =r $ and note that 
\begin{equation}F(\tau) = \frac{1}{2}\left( 1 + \mbox{erf} \left( \frac{\tau}{\| \vct{x} \|_2\sqrt{2} } \right) \right) \in  \left [\frac{1}{2}\left(1+\erf\left(\frac{r}{\sqrt{2}R}\right)\right) ~,~ \frac{1}{2}\Big(1+\erf \left(\frac{1}{\sqrt{2}}\right)\Big)\right ]. \label{eq:Ftau}\end{equation}
For $\delta<1$, set  $$\eta = \frac{1}{2}\erf\left(\frac{(1-\delta) r}{\sqrt{2}R}\right)$$
 and
  $$\gamma = \frac{1}{2}\left(\erf\left(\frac{r}{\sqrt{2}R}\right) -\erf\left(\frac{(1-\delta)r}{\sqrt{2}R}\right)  \right).$$
Noting that 
\begin{equation}\frac{d\erf(x)}{dx} = \frac{2}{\sqrt{\pi}}\exp{(-x^2)}\label{eq:deriv_erf},\end{equation}
we have for $0\leq a\leq b$ 
\begin{equation}  (b-a)\frac{2}{\sqrt{\pi}}\exp{(-b^2)} \leq \erf(b)-\erf(a) \leq (b-a)\frac{2}{\sqrt{\pi}}\exp{(-a^2)}.\label{eq:erf_bounds} \end{equation}
Consequently 
$$ \frac{\delta}{\sqrt{2\pi e}}\frac{r}{R} \leq \gamma  \leq 
 \frac{\delta}{\sqrt{2\pi}}\frac{r}{R}.$$
By the DKW inequality, with probability exceeding \[1-2\exp(-2m\gamma^2 ) \geq 1-2\exp\left(-\frac{\delta^2 r^2}{\pi e R^2} m\right),  \] we have $|F_m(\tau)-F(\tau)| \leq \gamma$. Together with \eqref{eq:Ftau} this gives
\[F_m(\tau) \in \left[ \frac{1}{2}+\eta,~~ \frac{1}{2}+\frac{1}{2}\erf\left(\frac{1}{\sqrt{2}}\right) + \frac{1}{2}\left(\erf\left(\frac{r}{\sqrt{2}R}\right) -\erf\left(\frac{(1-\delta)r}{\sqrt{2}R}\right)  \right)  \, \right ]. 
\]
This yields the conclusion of the lemma provided \[ 2\gamma=\erf\left(\frac{r}{\sqrt{2}R}\right) -\erf\left(\frac{(1-\delta)r}{\sqrt{2}R}\right)   \leq \erf\left(1\right)-\erf\left(\frac{1}{\sqrt{2}}\right), \]
which holds when $\delta\leq 1/5$, as then we have $\gamma \leq \frac{\delta r}{\sqrt{2\pi} R} \leq \frac{1}{5\sqrt{2\pi}} \leq \frac{1}{2}( \erf\left(1\right)-\erf\left(\frac{1}{\sqrt{2}}\right)$).

\begin{theorem}
Fix $0<\delta <\frac{2\sqrt{e}}{5}R$, and let $\vct{x} \in \mathbb{R}^n$ be such that $r \leq \|\vct{x}\|_2 \leq R$ for known strictly positive constants $r$ and $R$.   Let $\mtx{A} \in \mathbb{R}^{m \times n}$ be a matrix with independent identically distributed ${\cal N}(0,1)$ entries.  Set $\tau=r$ and compute $F_m(\tau)$ and $\Lambda = \Lambda_m(\tau)$  from  $\vct{y} = \sign (\vct{Ax} - \boldsymbol{\tau})$ via \eqref{eq:Fm_tau} and \eqref{eq:hatR} respectively.
If $$m \geq 4 \pi e^2 \frac{R^4}{r^2}\delta^{-2} \log{\frac{2}{\varepsilon}},$$
 then with probability at least $ 1- \varepsilon$ it holds that $$\Big|\| \vct{x} \|_2 -\Lambda_m(\tau)\Big| \leq \delta.$$
\label{normest2}
\end{theorem}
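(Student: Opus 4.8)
The plan is to reduce the theorem to Lemma~\ref{lem:helper} and Lemma~\ref{hlemma} by recognizing that, with $\tau = r$ and $F(\tau) = \frac12\bigl(1+\erf(\tau/(\sqrt2\,\|\vct{x}\|_2))\bigr)$, one has exactly $\|\vct{x}\|_2 = \frac{\tau}{\sqrt2}\,h(F(\tau))$ and $\Lambda_m(\tau) = \frac{\tau}{\sqrt2}\,h(F_m(\tau))$, where $h(u) = 1/\erf^{-1}(2u-1)$ is the very function appearing in Lemma~\ref{hlemma}. Hence $\bigl|\,\|\vct{x}\|_2 - \Lambda_m(\tau)\bigr| = \frac{r}{\sqrt2}\,|h(F(\tau)) - h(F_m(\tau))|$, and the whole problem becomes: control $|h(F(\tau)) - h(F_m(\tau))|$ using (i) a bound on $|F(\tau)-F_m(\tau)|$, (ii) the fact that $F(\tau)$ and $F_m(\tau)$ lie in a region where $h$ is Lipschitz, and (iii) a bound on $|h'|$ there.

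First I would invoke Lemma~\ref{lem:helper}, but with its free parameter set to the rescaled value $\delta_0 := \delta/(2\sqrt{e}\,R)$ rather than $\delta$ itself. The hypothesis $\delta < \tfrac{2\sqrt e}{5}R$ of the theorem is precisely $\delta_0 < 1/5$, and the sample-size bound $m \geq 4\pi e^2\frac{R^4}{r^2}\delta^{-2}\log\frac{2}{\varepsilon}$ rearranges exactly to $m \geq \pi e\frac{R^2}{r^2}\delta_0^{-2}\log\frac{2}{\varepsilon}$, so Lemma~\ref{lem:helper} applies with parameter $\delta_0$. It yields, with probability at least $1-\varepsilon$, both $|F(\tau)-F_m(\tau)| < \frac{\delta_0}{\sqrt{2\pi}}\frac{r}{R}$ and $F(\tau),F_m(\tau) \in \bigl[\tfrac12+\eta,\ \tfrac12(1+\erf(1))\bigr]$ with $\eta = \tfrac12\erf\!\bigl(\tfrac{(1-\delta_0)r}{\sqrt2 R}\bigr) > 0$. (One should note in passing that $F(\tau) = \tfrac12(1+\erf(r/(\sqrt2\|\vct{x}\|_2))) < \tfrac12(1+\erf(1))$ strictly, since $\|\vct{x}\|_2 \geq r$, and likewise $F_m(\tau)$ stays strictly below that endpoint thanks to the slack in the DKW estimate, so the half-open interval required by Lemma~\ref{hlemma} is indeed in force.)

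Next I would apply Lemma~\ref{hlemma} with $a = F(\tau)$, $b = F_m(\tau)$, and the above $\eta$, giving $|h(F(\tau)) - h(F_m(\tau))| \leq |h'(\tfrac12+\eta)|\,|F(\tau)-F_m(\tau)|$. To finish, I would bound $|h'(\tfrac12+\eta)|$ using the explicit formula from the proof of Lemma~\ref{hlemma}: setting $z := \erf^{-1}(2\eta) = \tfrac{(1-\delta_0)r}{\sqrt2 R}$, one has $|h'(\tfrac12+\eta)| = \sqrt\pi\,e^{z^2}/z^2$. Since $r \leq R$ we get $z \leq 1/\sqrt2$, so $e^{z^2} \leq \sqrt e$; and since $\delta_0 < 1/5$ we get $z \geq \tfrac{4r}{5\sqrt2 R}$, so $1/z^2 \leq \tfrac{25R^2}{8r^2}$. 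Combining $|h'(\tfrac12+\eta)| \leq \tfrac{25\sqrt{\pi e}\,R^2}{8r^2}$ with $|F(\tau)-F_m(\tau)| < \tfrac{\delta_0 r}{\sqrt{2\pi}\,R} = \tfrac{\delta r}{2\sqrt{2\pi e}\,R^2}$ in $\frac{r}{\sqrt2}|h'(\tfrac12+\eta)|\,|F(\tau)-F_m(\tau)|$, all of the $r$-, $R$-, and $\sqrt{\pi e}$-factors cancel and one is left with the clean bound $\tfrac{25}{32}\,\delta < \delta$, which is the claim.

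The main obstacle is bookkeeping rather than conceptual: one must choose the rescaling $\delta_0 = \delta/(2\sqrt e R)$ so that \emph{both} hypotheses of Lemma~\ref{lem:helper} line up simultaneously with those of Theorem~\ref{normest2}, and then check that the derivative factor $|h'(\tfrac12+\eta)|$ — which a priori blows up either when $2\eta$ approaches $\erf(1)$ or when $\eta$ approaches $0$ (the singularity of $h$ at $u = 1/2$) — is in fact uniformly $O(R^2/r^2)$ on the relevant range. This uses both ends of the estimate on $z$: the upper bound $z \leq 1/\sqrt2$ keeps the exponential $e^{z^2}$ tame, and the lower bound $z \geq \tfrac{4r}{5\sqrt2R}$ (coming from $\delta_0 < 1/5$) keeps us away from the singularity. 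Everything else is the substitution from the first paragraph and a short chain of elementary inequalities.
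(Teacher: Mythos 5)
Your proposal is correct and follows essentially the same route as the paper's own proof: the same reduction $|\,\|\vct{x}\|_2-\Lambda_m(\tau)| = \tfrac{\tau}{\sqrt2}|h(F(\tau))-h(F_m(\tau))|$, the same rescaling $\delta_0=\delta/(2\sqrt{e}R)$ fed into Lemma~\ref{lem:helper}, and the same application of Lemma~\ref{hlemma} with $\eta=\tfrac12\erf\bigl(\tfrac{(1-\delta_0)r}{\sqrt2 R}\bigr)$, ending in an identical computation (the paper also arrives at the constant $\tfrac{25}{32}$ before rounding $\tfrac{25}{16}$ up to $2$ to state the bound as $2R\sqrt{e}\,\delta_0=\delta$). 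Your explicit bookkeeping of the factor $|h'(\tfrac12+\eta)|=\sqrt{\pi}e^{z^2}/z^2$ via the two-sided bounds on $z$ is just a transparent rewriting of the paper's chain of equalities.
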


\begin{proof}
Define the function $h:(0,1) \to \mathbb{R}$ as in Lemma \ref{hlemma} by $h(u) = \frac{1}{\erf^{-1}(2u-1)}$. 
Then $|\|\vct{x}\|_2 - \Lambda_m(\tau)| = \frac{\tau}{\sqrt{2}} \left| h(F(\tau)) - h(F_m(\tau)) \right|$.    If $F(\tau)$ and $F_m(\tau)$ are in $\left[1/2+\eta, \frac{1}{2}(\mbox{erf}(1) + 1) \right)$ for some $\eta>0$, then by Lemma \ref{hlemma} we have: 
$$|\|\vct{x}\|_2 - \Lambda_m(\tau)| = \frac{\tau}{\sqrt{2}} |h(F(\tau)) - h(F_m(\tau))| \leq  \frac{\tau}{\sqrt{2}} |h'(1/2 + \eta)| |F(\tau)-F_m(\tau)|.$$     
Indeed, provided $\delta_0 := \frac{1}{2R\sqrt{e}} \delta <1/5$, by Lemma \ref{lem:helper} we have that $|F(\tau) - F_m(\tau)| \leq \gamma:=\frac{\delta_0}{\sqrt{2\pi}}\frac{r}{R}$, and that $F_m(\tau)$ and $F(\tau)$ do satisfy the assumptions of Lemma \ref{hlemma} with probability at least $ 1-\varepsilon$, for $\eta = \frac{1}{2} \erf(\frac{(1-\delta_0)r}{\sqrt{2}R}).$ 
   So using Lemma \ref{hlemma} and the definition of $\gamma$, and recalling $\tau = r,$ we conclude that
\begin{eqnarray*}
 |\| \vct{x}\|_2 - \Lambda_m(\tau)| &&= \frac{\tau}{\sqrt{2}} |h(F(\tau)) - h(F_m(\tau))|\\
   &&\leq \frac{\tau}{\sqrt{2}} | h'(1/2 + \eta) | \frac{\delta_0}{\sqrt{2\pi}}\frac{r}{R}
  \\
  &&=\frac{\tau}{\sqrt{2}} \sqrt{\pi} ( h(1/2 + \eta) )^2 \exp( (1/h(1/2+\eta))^2) \frac{\delta_0}{\sqrt{2\pi}}\frac{r}{R} \\
  &&= \frac{\tau}{\sqrt{2}} \sqrt{\pi} \left( \frac{1}{\erf^{-1}\left( \erf(\frac{(1-\delta_0)r}{\sqrt{2}R})\right)} \right)^2 \exp\left( \left( \erf^{-1}\left( \erf(\frac{(1-\delta_0)r}{\sqrt{2}R})\right) \right)^2  \right) \frac{\delta_0}{\sqrt{2\pi}}\frac{r}{R} \\
    &&= \frac{\tau}{\sqrt{2}} \sqrt{\pi} \left( \frac{\sqrt{2}R}{(1-\delta_0)r} \right)^2 \exp\left( \left( \frac{(1-\delta_0)r}{\sqrt{2}R} \right)^2  \right) \frac{\delta_0}{\sqrt{2\pi}}\frac{r}{R} \\
  &&\leq 2R \sqrt{e} \delta_0\\
  && = \delta.
  \end{eqnarray*}
The second-to-last equality uses the identity $\erf^{-1}(\erf(x)) = x$, and the final inequality uses that $\delta_0 < 1/5$ by assumption.
\end{proof}

The previous theorem gave a bound for norm estimation for a particular fixed $\vct{x}$, and we assumed no particular structural constraints on $\vct{x}$.   We now provide a \emph{universal} norm estimation bound akin to Theorem \ref{normest1} for the class of $s$-sparse vectors: $\vct{x} \in \mathbb{R}^n$ satisfying $\| \vct{x} \|_0 \leq s$, $r \leq \| \vct{x} \|_2 \leq R$.

\begin{theorem}
For $ i = 1, \dots, m$, let the random vectors $\vct{a}_i \in \mathbb{R}^n$ have i.i.d. ${\cal{N}}(0,1)$ entries.  
Fix the positive constants $r \leq R$ and $0<\delta \leq R$. 
There exists a constant $C_1$, such that if 
$$
m\geq C_1\frac{R^4}{r^2}\delta^{-2} s \log\left(\frac{nR^2}{s\delta r}\right)
$$ 
then with probability exceeding $1  - 14\exp\left(-\frac{\delta^2 r^2}{C_1R^4} m\right)$, the bound
$$
\Big| \| \vct{x} \|_2 -\Lambda_m\left(\frac{3r}{5} \right) \Big| \leq \delta
$$
holds uniformly for all vectors $\vct{x} \in \mathbb{R}^n$ in the set $\{ r \leq \| \vct{x} \|_2 \leq R\} \cap \{\| \vct{x} \|_0 \leq s\}$.
Here, $\Lambda_m(\frac{3r}{5})$ is the estimator computed from  $\vct{y} = \sign (\vct{A}\vct{x} - \boldsymbol{\tau})$ via \eqref{eq:hatR} with $\tau = \frac{3r}{5}$.
\label{normest_uniform}
\end{theorem}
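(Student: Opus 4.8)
The plan is to bootstrap the pointwise argument behind Theorem~\ref{normest2} into a uniform one by a net-plus-stability argument over $\mathcal{K} := \{\vct{x}\in\R^n : \|\vct{x}\|_0\le s,\ r\le\|\vct{x}\|_2\le R\}$, writing $F(\tau;\vct{x}) := \tfrac12\bigl(1+\erf(\tau/(\sqrt2\,\|\vct{x}\|_2))\bigr)$ and $F_m(\tau;\vct{x})$ for the empirical version to make the $\vct{x}$-dependence explicit. First I would record the reduction. With $\tau=3r/5$ and $\vct{x}\in\mathcal{K}$ one has $\tau/(\sqrt2\,\|\vct{x}\|_2)\in[\,3r/(5\sqrt2\,R),\ 3/(5\sqrt2)\,]$, so $F(\tau;\vct{x})$ lies in a compact subinterval of $(1/2,\ \tfrac12(1+\erf(1)))$: bounded below by $\tfrac12+\eta$ with $\eta=\tfrac12\erf(3r/(5\sqrt2\,R))\gtrsim r/R$, and bounded above by a universal gap since $3/(5\sqrt2)<1$. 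Hence $h(u)=1/\erf^{-1}(2u-1)$ is Lipschitz there with $|h'(\tfrac12+\eta)|\lesssim\eta^{-2}\lesssim R^2/r^2$, and by Lemma~\ref{hlemma}, $\bigl|\,\|\vct{x}\|_2-\Lambda_m(\tau;\vct{x})\bigr|=\tfrac{\tau}{\sqrt2}\bigl|h(F(\tau;\vct{x}))-h(F_m(\tau;\vct{x}))\bigr|\lesssim \tfrac{R^2}{r}\,\bigl|F(\tau;\vct{x})-F_m(\tau;\vct{x})\bigr|$, provided $F_m(\tau;\vct{x})$ also lies in that interval. So it suffices to show that, with the stated probability, $\sup_{\vct{x}\in\mathcal{K}}\bigl|F(\tau;\vct{x})-F_m(\tau;\vct{x})\bigr|\le\gamma$ for a suitable $\gamma\asymp\delta r/R^2$; the advertised bound $m\gtrsim R^4 r^{-2}\delta^{-2}s\log(nR^2/(s\delta r))$ is then exactly $m\gtrsim\gamma^{-2}s\log(\cdots)$.

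Next I would build an $\alpha$-net $\mathcal{N}$ of $\mathcal{K}$: on each of the $\binom{n}{s}$ supports, $\ell_2$-cover the radius-$R$ ball of the corresponding $s$-dimensional coordinate subspace by at most $(1+2R/\alpha)^s$ points, with $\alpha$ a small polynomial in $\gamma$ and $r$ (say $\alpha\asymp\gamma^{3/2}r$), so $\log|\mathcal{N}|\lesssim s\log(nR/(s\alpha))\asymp s\log(nR^2/(s\delta r))$. For each fixed $\vct{z}\in\mathcal{N}$, the Dvoretzky--Kiefer--Wolfowitz inequality (Theorem~\ref{DKW}) applied to the i.i.d.\ samples $\langle\vct{a}_i,\vct{z}\rangle\sim\mathcal N(0,\|\vct{z}\|_2^2)$ gives $\sup_t|F_m(t;\vct{z})-F(t;\vct{z})|\le\gamma/2$ with probability at least $1-2e^{-m\gamma^2/2}$, and crucially this is uniform over the threshold $t$, not merely at $t=\tau$. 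A union bound over $\mathcal{N}$ makes it hold simultaneously for all $\vct{z}\in\mathcal{N}$ with probability at least $1-2|\mathcal{N}|e^{-m\gamma^2/2}\ge 1-2e^{-m\gamma^2/4}$, the last step using $m\gamma^2/4\ge\log|\mathcal{N}|$, which is our hypothesis on $m$.

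The crux is transferring the estimate from $\mathcal{N}$ to an arbitrary $\vct{x}\in\mathcal{K}$. Pick $\vct{z}\in\mathcal{N}$ with $\vct{u}:=\vct{x}-\vct{z}$ satisfying $\|\vct{u}\|_2\le\alpha$ (so $\|\vct{z}\|_2\ge r-\alpha\ge r/2$); I must bound $|F_m(\tau;\vct{x})-F_m(\tau;\vct{z})|$, where $F_m(\tau;\cdot)$ is discontinuous in $\vct{x}$. The device: if $\tau$ lies between $\langle\vct{a}_i,\vct{x}\rangle$ and $\langle\vct{a}_i,\vct{z}\rangle$ then $|\langle\vct{a}_i,\vct{z}\rangle-\tau|\le|\langle\vct{a}_i,\vct{u}\rangle|$, so for any width $\beta>0$, $|F_m(\tau;\vct{x})-F_m(\tau;\vct{z})|\le\tfrac1m\#\{i:|\langle\vct{a}_i,\vct{z}\rangle-\tau|\le\beta\}+\tfrac1m\#\{i:|\langle\vct{a}_i,\vct{u}\rangle|>\beta\}$. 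The first term equals $F_m(\tau+\beta;\vct{z})-F_m(\tau-\beta;\vct{z})$, which by the DKW bound already secured on $\mathcal{N}$ is at most $F(\tau+\beta;\vct{z})-F(\tau-\beta;\vct{z})+\gamma\le 2\beta/(\sqrt{2\pi}\,\|\vct{z}\|_2)+\gamma\lesssim\gamma$ once $\beta\lesssim\gamma r$ (Gaussian density bound, $\|\vct{z}\|_2\ge r/2$). The second term I would handle with a standard restricted-concentration (RIP-type) estimate for $\vct{A}/\sqrt m$, valid uniformly over $2s$-sparse vectors with probability $1-Ce^{-cm}$ whenever $m\gtrsim s\log(n/s)$ (implied by the hypothesis since $R\ge r,\delta$): it gives $\tfrac1m\|\vct{A}\vct{u}\|_2^2\le 2\|\vct{u}\|_2^2\le 2\alpha^2$, hence $\tfrac1m\#\{i:|\langle\vct{a}_i,\vct{u}\rangle|>\beta\}\le 2\alpha^2/\beta^2\lesssim\gamma$ for $\alpha\lesssim\beta\sqrt\gamma\asymp\gamma^{3/2}r$ (this pins down the choice of $\alpha$; the resulting $\log|\mathcal{N}|$ is still $\asymp s\log(nR^2/(s\delta r))$ up to constants). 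Finally $|F(\tau;\vct{x})-F(\tau;\vct{z})|\lesssim\alpha/r\lesssim\gamma$ since $F(\tau;\cdot)$ is a smooth function of $\|\cdot\|_2$ with derivative $O(1/r)$ on $\mathcal{K}$, and one checks along the way that $F_m(\tau;\vct{x})$ remains in the Lipschitz window of $h$. Summing the three pieces by the triangle inequality yields $\sup_{\vct{x}\in\mathcal{K}}|F(\tau;\vct{x})-F_m(\tau;\vct{x})|\le\gamma$ on the intersection of the net event and the RIP event, whose failure probability is at most $14\exp(-c\,m\delta^2 r^2/R^4)$, the constant $14$ absorbing the factor $2$ from DKW together with the constants from the union bound and the RIP estimate.

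I expect the main obstacle to be precisely this last step: controlling the discontinuous $F_m(\tau;\cdot)$ under small $\ell_2$-perturbations. The dichotomy ``either the net sample $\langle\vct{a}_i,\vct{z}\rangle$ is already within $\beta$ of $\tau$, or the perturbation $\langle\vct{a}_i,\vct{u}\rangle$ is atypically large'' is what unlocks it, but it forces one to juggle three scales---the target accuracy $\gamma$, the tube half-width $\beta$, and the net resolution $\alpha$---so that every error term is $O(\gamma)$ while $\log|\mathcal{N}|\asymp\gamma^{-2}$-many measurements (up to the stated $s\log(nR^2/(s\delta r))$ factor) still suffice, all while keeping $F_m$ and $F$ inside the interval where Lemma~\ref{hlemma} applies. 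Everything else---DKW, a covering-number count, and a restricted isometry bound---is routine.
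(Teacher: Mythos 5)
Your proposal is correct, and it shares the paper's overall architecture --- a covering net of the bounded $s$-sparse vectors, DKW plus a union bound at the net points, a restricted isometry bound on $\tfrac{1}{\sqrt m}\vct{A}$ to control the perturbation $\vct{x}-\vct{z}$, and Lemma~\ref{hlemma} to convert EDF error into norm error --- but the transfer step is executed genuinely differently. The paper keeps a relatively coarse net (resolution $\xi \asymp \delta r\sqrt{r/R}$ before rescaling), uses the same ``either $\langle\vct{a}_i,\vct{z}\rangle$ is near the threshold or $\langle\vct{a}_i,\vct{x}-\vct{z}\rangle$ is large'' dichotomy with a \emph{fixed} tube half-width $\sim r/2$, and absorbs the resulting threshold shift by sandwiching $F_m(r,\vct{x})$ between $F_m(r/3,\vct{q})$ and $F_m(5r/3,\vct{q})$, then invoking the pointwise Theorem~\ref{normest2} at those two shifted thresholds together with the monotonicity of $h$. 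You instead shrink the tube to $\beta\asymp\gamma r$ and the net to $\alpha\asymp\gamma^{3/2}r$ so that $\sup_{\vct{x}}|F_m(\tau;\vct{x})-F(\tau;\vct{x})|\le\gamma\asymp\delta r/R^2$ holds at the single threshold $\tau$, and apply the Lipschitz bound on $h$ once; this is arguably cleaner, avoids re-invoking Theorem~\ref{normest2} at auxiliary thresholds, and makes explicit use of the threshold-uniformity of DKW. The price is a finer net, but since the resolution enters only logarithmically and $\delta\le R$, $r\le R$, your $\log|\mathcal{N}|\asymp s\log(nR^4/(s\delta^{3/2}r^{5/2}))$ is within a constant factor of the stated $s\log(nR^2/(s\delta r))$, and your failure probability bookkeeping (DKW union bound plus the RIP event, both dominated by $\exp(-c\,\delta^2r^2m/R^4)$ since $\delta^2r^2/R^4\le 1$) matches the theorem up to the value of the absolute constant. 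The only points you should spell out in a full write-up are the ones you flag yourself: that $\gamma\lesssim r/R$ so $F_m(\tau;\vct{x})$ stays in the window $[\tfrac12+\eta,\tfrac12(1+\erf(1)))$ where Lemma~\ref{hlemma} applies, and that $\alpha\le r/2$ so the relevant net points satisfy $\|\vct{z}\|_2\ge r/2$.
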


\begin{proof}
The idea of the proof is to first show that the conclusions of Theorem \ref{normest2} hold uniformly over a sufficiently fine net of points contained in the set of $s$-sparse vectors of bounded norm. We then show that the EDF associated with an arbitrary $\vct{x}$ is well approximated by the EDF associated with some element of the net. This will allow us, via the function $h$  (from Lemma \ref{hlemma}), to obtain a bound on the norm estimation error (which holds uniformly for the class of bounded $s$-sparse vectors).

It will be helpful below to define $F_m$ and $\Lambda$ as functions of more than one argument, so $F_m(\tau, \vct{z}) := \frac{ \# \{ i: \langle \vct{a}_i, \vct{z} \rangle < \tau  \} }{m}$ and $\Lambda_m(\tau, \vct{z}) := \frac{\tau}{\sqrt{2} \text{erf}^{-1}(2 F_m(\tau, \vct{z})-1)}$. Moreover, we will require the radial projector 
$$P(\vct{q}) = \frac{\vct{q}}{\| \vct{q} \|_2} \cdot \max\{ \frac{5r}{3}, \| \vct{q} \|_2 \}, $$ and the set of bounded sparse vectors ${\cal S}:=  \{\vct{x} \in \mathbb{R}^n: \| \vct{x} \|_2 \leq R, \| \vct{x} \|_0 \leq s \}.$

 \noindent
{\bf Step (I)} Our first goal is to prove that for a finite subset of points ${\cal Q} \subseteq \cal{S}$ satisfying
\begin{equation}
\label{eq:proximity}
\min_{\vct{q} \in {\cal Q}} \| \vct{x} - \vct{q} \|_2 \leq \frac{\xi}{8} \quad \quad \text{ for each }\vct{x} \in {\cal S},
\end{equation}
we have
\begin{equation}
\label{eq:proximity2}
 \min_{\vct{q} = P(\vct{q}'): \vct{q}' \in \mathcal{Q}} \| \vct{x} - \vct{q} \|_2 \leq \xi/4 \quad \text{ for each } \vct{x} \in {\cal S} \text{ with }  \| \vct{x} \|_2 \geq \frac{5r}{3}.
\end{equation}
Along the way, we will control the cardinality of $\mathcal{Q}$.

In fact, by a well-known result in the literature on covering numbers (see, e.g.,\cite{foucart2013mathematical}[ Appendix C.2]) a set $\mathcal{Q}$ as in \eqref{eq:proximity} exists. Let $B_2^n$ denote the unit Euclidean ball in $\mathbb{R}^n$.   Given a fixed $s$-dimensional linear subspace $T$ of $\mathbb{R}^n$, there exists a finite set of points ${\cal Q}_T$ in $B_2^n \cap T$ such that $\max\limits_{\vct{x} \in B_2^n \cap T} \min\limits_{\vct{q} \in {\cal Q}_T} \| \vct{x} - \vct{q} \|_2 \leq \frac{\xi}{8}$, and  $|{\cal Q}_T| \leq (24 /\xi)^s.$  Picking such a set of points for each of the $n\choose s$ $\leq (\frac{n e}{s})^s$ $s$-dimensional linear subspaces $T$ whose union is $\{ \vct{x} \in \mathbb{R}^n: \| \vct{x} \|_0 \leq s\}$, and rescaling, we arrive at a set of points ${\cal Q}$ in ${\cal S}$ of size

$$|{\cal Q}| \leq \left(\frac{ne}{s} \right)^s (24 R/\xi\big)^s$$ satisfying \eqref{eq:proximity}.

Now, note that by Theorem 9.2 of \cite{foucart2013mathematical} there exists a constant $C>0$ so that with probability exceeding $1-\varepsilon$ the normalized matrix $\frac{1}{\sqrt{m}} \mtx{A} \in \mathbb{R}^{m \times n}$ has the \emph{restricted isometry property} 
 of order $2s$ at level $\delta$ \cite{CRT05}, provided $m>C\delta^{-2}(s\log(n/s)+\log(\frac{2}{\varepsilon}))$. That is, the normalized matrix satisfies 

$$(1-\delta) \| \vct{x} \|_2 \leq \frac{1}{\sqrt{m}} \| \mtx{Ax} \|_2 \leq (1+\delta) \| \vct{x} \|_2 \quad \quad \forall \vct{x}: \| \vct{x} \|_0 \leq 2s.$$
Henceforth we condition on the event $\mathcal{E}_1$ that ${\mtx A}$ has this property. 

 For a vector $\vct{x}  \in {\cal S}$ with $ \| \vct{x} \|_2 \geq \frac{5r}{3},$ consider the point $\vct{q}' \in {\cal Q}$ minimizing \eqref{eq:proximity}.  Since $\vct{q}'$ must then have norm $\| \vct{q}' \|_2 \geq \| \vct{x} \|_2 - \xi/8 \geq \frac{5r}{3} - \xi/8$, the triangle inequality gives
 $$
 \| \vct{x} - P(\vct{q'}) \|_2 \leq  \| \vct{x - q'} \|_2 + \|  \vct{q'} - P(\vct{q'}) \|_2  \leq \xi/8 + \xi/8 = \xi/4.
 $$
It follows that 
\begin{equation}
\label{eq:proximity2}
 \min_{\vct{q} = P(\vct{q}'): \vct{q}' \in \mathcal{Q}} \| \vct{x} - \vct{q} \|_2 \leq \xi/4 \quad \text{ for each } \vct{x} \in {\cal S} \text{ with }  \| \vct{x} \|_2 \geq \frac{5r}{3}.
\end{equation}
{\bf Step (II)} We will now show that approximating $\vct{x}$ with $\vct{q}$, as in \eqref{eq:proximity2}, entails a minor distortion in the EDF. In particular, we will show that with $\xi$ small enough  the inequalities $F_m(r/3,\vct{q}) \leq F_m(r,\vct{x}) \leq F_m(5r/3,\vct{q})$ hold.

 To that end note that $\vct{x-q}$ is $2s$-sparse since each of $\vct{x}, \vct{q} \in {\cal S}$ are individually $s$-sparse. 
Let $m^* = |\{ i: |\langle \vct{a}_i,\vct{x}-\vct{q} \rangle |^2 \geq r^2/4  \}|$. We then have
\begin{align}
m^* r^2/4 &\leq \| \mtx{A}(\vct{x-q}) \|_2^2 \nonumber \\
&\leq  {m}(1+\delta)^2 \| \vct{x-q} \|_2^2 \nonumber \\
&\leq {m}(1+\delta)^2  \xi^2 / 16. \nonumber 
\end{align}
It follows that $m^*\leq \frac{\xi^2}{r^2}m$. Moreover,
\begin{align}
m\cdot F_m(r,\vct{x}) &= | \{ i: \scalprod{\vct{a}_i,\vct{x}} \leq r \}| \nonumber\\
&\geq | \{ i: \scalprod{\vct{a}_i,\vct{q}} \leq r/2 \text{ and } \scalprod{\vct{a}_i,\vct{x-q}} \leq r/2  \}|\nonumber\\
&= m - | \{ i: \scalprod{\vct{a}_i,\vct{q}} \geq r/2 \text{ or } \scalprod{\vct{a}_i,\vct{x-q}} \geq r/2  \}|\nonumber\\
&\geq m - \big(  m (1-F_m(r/2,\vct{q})) + m^*  \big)\nonumber\\
&\geq   m \left(  F_m(r/2,\vct{q}) - \frac{\xi^2}{r^2}\right) \nonumber
\end{align}
So, repeating this calculation for the upper bound we have \[ F_m(r/2,\vct{q}) - \frac{\xi^2}{r^2} \leq F_m(r,\vct{x}) \leq F_m(3r/2,\vct{q}) + \frac{\xi^2}{r^2}.\]
We will now choose $\xi$ small enough to obtain  
\begin{equation} F_m(r/3,\vct{q}) \leq F_m(r,\vct{x}) \leq F_m(5r/3,\vct{q}). \label{eq:desired}\end{equation}
In particular, for the right hand side inequality of \eqref{eq:desired} to hold we desire
\begin{align*}
\xi^2/r^2 &\leq F_m(5r/3,\vct{q})-F_m(3r/2,\vct{q})\\
&=\big(F_m(5r/3,\vct{q}) - F(5r/3,\vct{q})\big) + \big(F(5r/3,\vct{q}) -F(3r/2,\vct{q})\big)+\big(F(3r/2,\vct{q})-F_m(3r/2,\vct{q})\big).
\end{align*}
We use \eqref{eq:Ftau} and \eqref{eq:erf_bounds} to obtain
$$
F(5r/3,\vct{q}) -F(3r/2,\vct{q}) \geq \frac{r}{ \sqrt{ 2\pi e}6R}.
$$
 To bound the first and third terms, apply Lemma \ref{lem:helper} with $\frac{\varepsilon}{(\frac{ne}{s} )^s (24 R/\xi)^s}$ in place of $\varepsilon$, separately to each $\vct{q}$.   Take a union bound over all $| {\cal Q} | \leq \left(\frac{ne}{s} \right)^s (24 R/\xi\big)^s$ elements in ${\cal Q}$ to conclude that with probability exceeding $1 - 2\varepsilon$ the event $\mathcal{E}_2$, that
\begin{equation}
\label{eq:cond2}
\max_{\vct{q} \in {\cal Q}} \left\{  \left| F_m(5r/3,\vct{q}) - F(5r/3,\vct{q}) \right|,  \quad \left| F(3r/2,\vct{q})-F_m(3r/2,\vct{q}) \right|  \right\} \leq \frac{ \delta r }{\sqrt{2\pi} R},
\end{equation}
holds once the number of measurements exceeds
\begin{equation}
\label{eq:num2}
m \geq \frac{\pi e R^2}{\delta^2 r^2}\left(\log\left(\frac{2}{\varepsilon}\right)+s\log\left(\frac{24e nR}{s \xi} \right)\right) .
\end{equation}

The calculation for the left hand inequality of \eqref{eq:desired} is very similar to the calculation above. It yields that with the same number of measurements as in \eqref{eq:num2} the event $\mathcal{E}_3$, that 
\begin{equation}
\label{eq:cond2}
\max_{\vct{q} \in {\cal Q}} \left\{  \left| F_m(r/3,\vct{q}) - F(r/3,\vct{q}) \right|,  \quad \left| F(r/2,\vct{q})-F_m(r/2,\vct{q}) \right|  \right\} \leq \frac{ \delta r }{\sqrt{2\pi} R},
\end{equation}
holds with probability exceeding $1-2\varepsilon$. Conditioning on $\mathcal{E}_2$ and $\mathcal{E}_3$, we see that \eqref{eq:desired} is achieved if 
\[ \xi^2/r^2  \leq \frac{1}{\sqrt{2\pi e}}\frac{r}{6R}  - \frac{2\delta}{\sqrt{2\pi}}\frac{r}{6R} .\]
Imposing the condition 
\begin{equation}\delta \leq \frac{1}{5}\leq  \frac{1}{3\sqrt{e}}
\label{eq:delta}
\end{equation} 
 ensures the right hand side above is greater than $\frac{1}{\sqrt{2\pi e}}\frac{r}{18R}$. Then, choosing 
\begin{equation}
\xi \leq C_2 r \sqrt{r/R}
\label{eq:xi_bound1}
\end{equation} 
with $C_2= \sqrt{ \frac{1}{18\sqrt{2\pi e}}}$ yields \eqref{eq:desired}.

{\noindent\bf Step (III)} We will now obtain the desired error bound and probability estimate. 
First, invoke Theorem \ref{normest2} with $2R\sqrt{e}\delta$ in place of $\delta$ and $\frac{\varepsilon}{(\frac{ne}{s})^s (24 R/\xi)^s}$ in place of $\varepsilon$, separately to each $\vct{q}$.  Take a union bound over all elements in ${\cal Q}$ to obtain that with probability exceeding $1 - 2\varepsilon$,
\begin{equation}
\label{eq:cond3}
\max_{\vct{q} \in {\cal Q}} \hspace{1mm} \max \{ \left| \| \vct{q} \|_2 - \Lambda_m(r/3, \vct{q}) \right|, \left| \| \vct{q} \|_2 - \Lambda_m(5r/3, \vct{q}) \right|  \} \leq 2R \sqrt{e} \delta
\end{equation}
for a number of measurements $m$ as in \eqref{eq:num2}. Condition on the event $\mathcal{E}_4$ that \eqref{eq:cond3} holds.  We have by (the proof of) Lemma \ref{hlemma} that $t \rightarrow h(t)$ is decreasing over the range $t \in [F_m(r/3,\vct{q}), F_m(5r/3,\vct{q}) ]$; it follows that
$$
\left| \| \vct{q} \|_2 - \Lambda_m(r, \vct{x}) \right| \leq \max\left\{ \left| \| \vct{q} \|_2 - \Lambda_m(r/3, \vct{q}) \right|, \left| \| \vct{q} \|_2 - \Lambda_m(5r/3, \vct{q}) \right|  \right\}
$$
and consequently
\begin{align}
\left| \| \vct{x} \|_2 - \Lambda_m(r,\vct{x}) \right| &\leq \left| \| \vct{x} \|_2 - \| \vct{q} \|_2 \right| + \left| \| \vct{q} \|_2 - \Lambda_m(r,\vct{x}) \right| \nonumber \\
&\leq  \left| \| \vct{x} \|_2 - \| \vct{q} \|_2 \right| + \max \{ \left| \| \vct{q} \|_2 - \Lambda_m(r/3, \vct{q}) \right|, \left| \| \vct{q} \|_2 - \Lambda_m(5r/3, \vct{q}) \right|  \} \nonumber \\
&\leq \xi/4  + 2R\sqrt{e}\delta.  \label{eq:whatever}
\end{align} 
Finally, setting in \eqref{eq:whatever}
 \begin{equation}
 \xi \leq 4 R\sqrt{e}\delta
 \label{eq:xi_bound2}
 \end{equation} 
 we have
\begin{align}
\left| \| \vct{x} \|_2 - \Lambda_m(r,\vct{x}) \right| &\leq 3 R \sqrt{e}  \delta.
\end{align} 
 
 To obtain the probability bound in the theorem, select $\xi = C_2 \delta r \sqrt{r/R} $ to satisfy both \eqref{eq:xi_bound1} and \eqref{eq:xi_bound2}.   Then fix $m \geq C_1 \frac{R^2}{r^2}\delta^{-2}\left(\log\left(\frac{2}{\varepsilon}\right)+s\log\left(\frac{nR}{s \delta r} \right)\right)$ for some $C_1$ large enough, as with this choice the events $\mathcal{E}_i$  ($i=1,2,3,4$) hold simultaneously,  with probability  exceeding  $1- 7\varepsilon$. Finally, set $\varepsilon = 2\exp\left(-\frac{\delta^2 r^2}{2C_1R^2} m\right)$ to get the condition $m\geq 2C_1\frac{R^2}{r^2}\delta^{-2} s \log\left(\frac{nR}{s\delta r}\right)$.
  The full statement of the theorem follows by replacing $\delta$ by $\delta/(5R)$ and $r$ by $3r/5$ (and adjusting $C_1$ accordingly). In particular, with the condition in the theorem and the substitution  of $\delta$ by $\delta/(5R)$, \eqref{eq:delta} is satisfied.
\end{proof}

As noted above, this method of norm estimation does not give us an estimate of the direction $\vct{x}$ itself; it only yields an estimate of the norm.  In order to recover $\vct{x}$, we could easily combine the estimated norm with an estimate of $\vct{x}/ \| \vct{x} \|_2$ recovered as in Proposition \ref{prop131}.  

\begin{cor}
Let $\vct{x} \in \mathbb{R}^n$ be such that $0 < r \leq \| \vct{x} \|_2 \leq R$.  Let $\delta>0$ and choose $\tau $ as in Theorem \ref{normest2}. Suppose we have $m = m_1 + m_2$ random Gaussian vectors and we collect $m_1$ measurements of the form $y_i = \sign( \langle \vct{a_i, x} \rangle - \tau)$, and $m_2$ measurements  $y_i = \sign( \langle \vct{a_i, x}) \rangle$.  Suppose $\Lambda$ is calculated from the $m_1$ measurements as in Theorem \ref{normest2} and let $\vct{x^\sharp}$ be the solution to the optimization problem in Proposition \ref{prop131}.  If $m_1 \geq C_0\delta^{-5} R^5 \left( s \log^2(\frac{n}{s}) + \log( C/\varepsilon) \right)$ and $m_2 \geq 4 \pi e^2 \frac{R^4}{r^2} \delta^{-2}\log(4/\varepsilon)$, then with probability at least $1- \varepsilon$ it holds that $ \| \Lambda \vct{x}^\sharp -  \vct{x} \|_2 \leq \delta$.
\label{combined}
\end{cor}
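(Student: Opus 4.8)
The plan is to decouple the problem: estimate the direction $\vct{x}/\|\vct{x}\|_2$ from the un-shifted measurements using Theorem \ref{prop131}, estimate the magnitude $\|\vct{x}\|_2$ from the $\tau$-shifted measurements using Theorem \ref{normest2}, and then recombine the two with a single triangle inequality. Because the two batches use disjoint Gaussian vectors the corresponding failure events are independent, but it suffices to union-bound over them with failure budget $\varepsilon/2$ each.

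First I would apply Theorem \ref{prop131} with accuracy parameter $\delta_{\mathrm{dir}} := \delta/(2R)$. Under the stated $\delta^{-5}R^5\big(s\log^2(n/s)+\log(C/\varepsilon)\big)$-type lower bound on the number of un-shifted measurements, both hypotheses $m \gtrsim \delta_{\mathrm{dir}}^{-5}s\log^2(2n/s)$ and $m \gtrsim \delta_{\mathrm{dir}}^{-1}\log(C/\varepsilon)$ hold (the latter being weaker since $\delta_{\mathrm{dir}}\le 1$), so with probability at least $1-\varepsilon/2$ the normalized solution $\hat{\vct{x}} := \vct{x}^\sharp/\|\vct{x}^\sharp\|_2$ obeys $\big\|\hat{\vct{x}} - \vct{x}/\|\vct{x}\|_2\big\|_2 \le \delta_{\mathrm{dir}}$. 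The effective-sparsity hypothesis $\|\vct{x}\|_1\le\sqrt{s}\|\vct{x}\|_2$ needed here is exactly what the factor $s$ in the measurement count encodes.

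Next I would apply Theorem \ref{normest2} with accuracy parameter $\delta_{\mathrm{norm}} := \delta/2$, which is admissible since $\delta\le R$ gives $\delta_{\mathrm{norm}} < \tfrac{2\sqrt e}{5}R$; the stated $4\pi e^2(R^4/r^2)\delta^{-2}\log(4/\varepsilon)$ lower bound on the number of shifted measurements is, up to the harmless constant, the Theorem \ref{normest2} requirement with $\varepsilon/2$ in place of $\varepsilon$, so with probability at least $1-\varepsilon/2$ we get $\big|\Lambda - \|\vct{x}\|_2\big| \le \delta_{\mathrm{norm}}$. Conditioning on both events, which jointly hold with probability at least $1-\varepsilon$,
\[
\big\|\Lambda\hat{\vct{x}} - \vct{x}\big\|_2 \;\le\; \big|\Lambda - \|\vct{x}\|_2\big|\,\|\hat{\vct{x}}\|_2 \;+\; \|\vct{x}\|_2\,\Big\|\hat{\vct{x}} - \tfrac{\vct{x}}{\|\vct{x}\|_2}\Big\|_2 \;\le\; \delta_{\mathrm{norm}} + R\,\delta_{\mathrm{dir}} \;=\; \delta,
\]
which is the claim; the constants $C_0$ and $C$ in the statement absorb those of Theorems \ref{prop131} and \ref{normest2} after this rescaling.

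I do not expect a genuine obstacle — this is a composition of two already-proved theorems — so the work is entirely bookkeeping: splitting the failure probability, and, more importantly, splitting the target accuracy $\delta$ between the radial and angular errors in the right proportion, the factor $R$ multiplying $\delta_{\mathrm{dir}}$ being the reason the direction must be recovered to the finer accuracy $\delta/(2R)$. Two minor points to flag in the write-up: (i) the admissibility condition $\delta_{\mathrm{norm}} < \tfrac{2\sqrt e}{5}R$ of Theorem \ref{normest2} and the effective-sparsity condition of Theorem \ref{prop131} must be checked; and (ii) the optimizer $\vct{x}^\sharp$ of \eqref{1bitopt} is normalized by $\sum_i|\langle\vct{a}_i,\vct{x}^\sharp\rangle| = m$ rather than in $\ell_2$, so it is $\hat{\vct{x}} = \vct{x}^\sharp/\|\vct{x}^\sharp\|_2$ that should be scaled by $\Lambda$.
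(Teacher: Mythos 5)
Your proposal is correct and follows essentially the same route as the paper: apply Theorem \ref{prop131} to the $m_2$ unshifted measurements at accuracy $\delta/(2R)$, apply Theorem \ref{normest2} to the $m_1$ shifted measurements at accuracy $\delta/2$, combine the two failure events, and finish with the same triangle inequality splitting $\Lambda\vct{x}^\sharp-\vct{x}$ into a radial and an angular term. Your remark (ii) about scaling the $\ell_2$-normalized optimizer $\vct{x}^\sharp/\|\vct{x}^\sharp\|_2$ rather than $\vct{x}^\sharp$ itself is a point the paper's write-up glosses over, and is worth keeping.
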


\begin{proof}
By Theorem \ref{prop131}, we use a convex optimization problem to obtain 
$\vct{x^\sharp}$ such that 
$$ \left\|  \vct{x}^\sharp   - \vct{x}/\| \vct{x} \|_2 \right\|_2 \leq \frac{\delta}{2}R$$
 with probability at least $ 1 - \varepsilon/2$, using only the first $m_1 \geq C\delta^{-5}R^5 ( s \log(\frac{2n}{s}))$ measurements. With the remaining $m_2 \geq C_{R,r} \delta^{-2}\log(8/\varepsilon)$ measurements, we calculate $\Lambda$ and have by Theorem \ref{normest2} that with probability at least $1-\varepsilon/2$, we have $\left| \| \vct{x} \|_2 - \Lambda \right| \leq \frac{\delta}{2}$. 
Hence, with probability at least $1 - \varepsilon \leq (1-\varepsilon/2)^2 $, we have
\begin{align}
  \| \Lambda\vct{x}^\sharp -  \vct{x} \|_2 &\leq \left\| \Lambda\vct{x}^\sharp -  \| \vct{x} \|_2 \vct{x}^\sharp \right\|_2 + \left\|\vct{x}^\sharp \| \vct{x} \|_2  - \vct{x}\right\| \nonumber \\ 
  &\leq \delta/2 + \delta \| \vct{x} \|_2 /( 2 R  ) \nonumber \\
  &\leq \delta. \nonumber
  \end{align}
  \end{proof}

\section{Numerical Experiments}

Here we test the performance of the two proposed methods for one-bit compressive sensing with norm estimation.  In all experiments, we consider  $s$-sparse vectors $\vct{x} \in \mathbb{R}^n$ with $n=300$ and $s=10$ that are constructed by a uniform draw from the set $\mathcal{S} = \{ \vct{x} : r < \| \vct{x} \|_2 < R,  \| \vct{x} \|_0 < s  \}$ for r = 10, R = 20.
We estimate $\| \vct{x} \|_2$  in two ways: (1) using the approximation  $\|\vct{\hat{x}} \|$ produced as in Theorem \ref{normest1}, and (2) using the Gaussian empirical cumulative distribution function (EDF) as in Theorem \ref{normest2} (Figures 1a, 2a).   The first estimation method is referred to as $PV_{\text{aug}}$, because it precedes by applying an augmented version of the  optimization problem \eqref{1bitopt} of Plan and Vershynin \cite{plan2013one} as in Theorem \ref{normest1}.   In a second set of experiments, we estimate $\vct{x}$ itself, rather than just its norm $\| \vct{x} \|_2$, with (1) $\vct{\hat{x}}$ as in Theorem \ref{normest1} ($PV_{aug}$), and (2) by partitioning the measurements into two sets, estimating the norm using one set according to the EDF method described in Theorem \ref{normest2}, and estimating the direction using the remaining measurements, as in Corollary \ref{combined}. (Figures 1b, 2b). 

\begin{figure}[h]
\centerline{
\includegraphics[width=90mm]{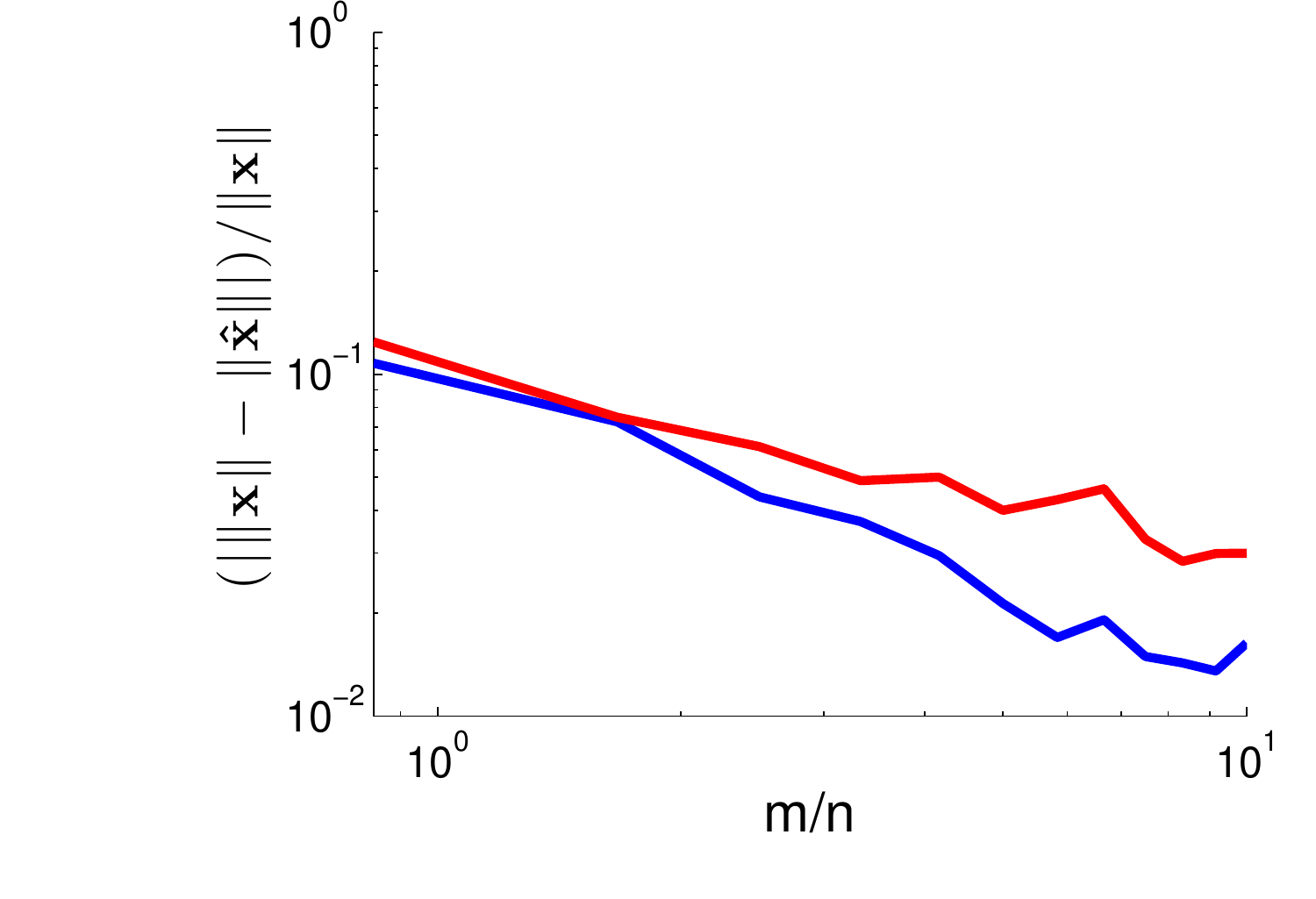}
\includegraphics[width = 90mm]{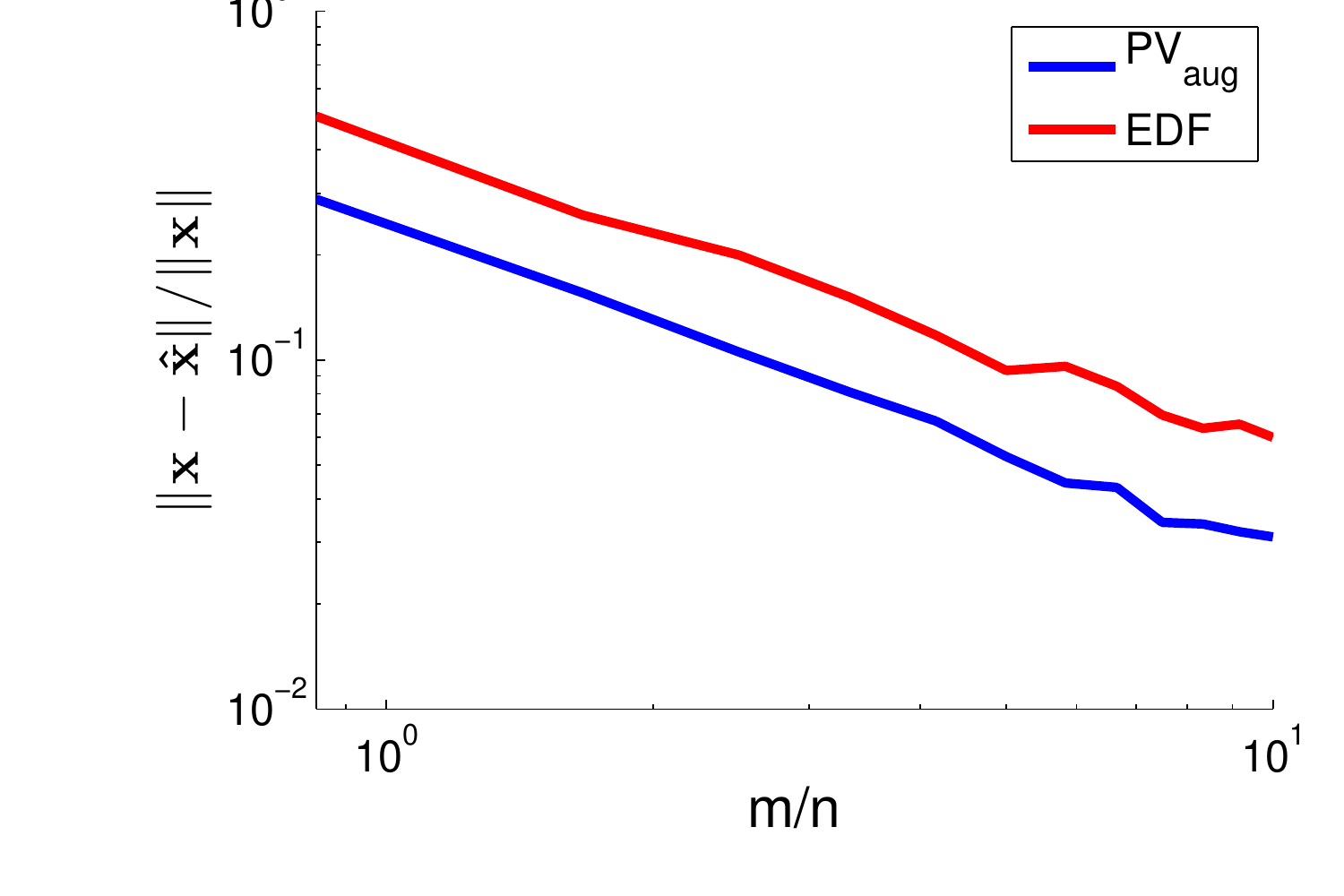}}
 \label{MoverN} 
 \caption{Error of the reconstructed norm (left) and reconstructed signal (right) for values of the number of measurements $m$.  Here, $\tau$ is held at $r$.  Results are averaged over 40 trials.}
\end{figure}

In Figure 1 we plot recovery error for various values of $m/n$.  Note that the \emph{oversampled} regime  $m >n$, while uninteresting in classical compressive sensing setting (with no quantization), is still potentially useful in the one-bit setting, particularly when measurements are fast or cheap relative to finer quantization.  For each value of $m/n$ we report the average error (over  40 trials) in estimating each of  $\vct{x}$ and $\| \vct{x} \|_2$.    The $PV_{aug}$ method outperforms the EDF method in the plotted regime, at the cost of more computation time.

We also explore the effect of the choice of threshold $\tau$ on the accuracy of recovery for both methods.  In the EDF method, each measurement is quantized according to whether it is above or below the same threshold $\tau$.  In the $PV_{aug}$ method, we consider for the same parameter $\tau$  the thresholds $b_i \sim {\cal N}(0,\tau^2)$.  In this case, the expected norm of $\vct{b}$ equals the norm of the threshold vector $\boldsymbol{\tau}=( \tau, \tau,..., \tau)$ used in the EDF method.

\begin{figure}[h]
\centerline{
\includegraphics[width = 90mm]{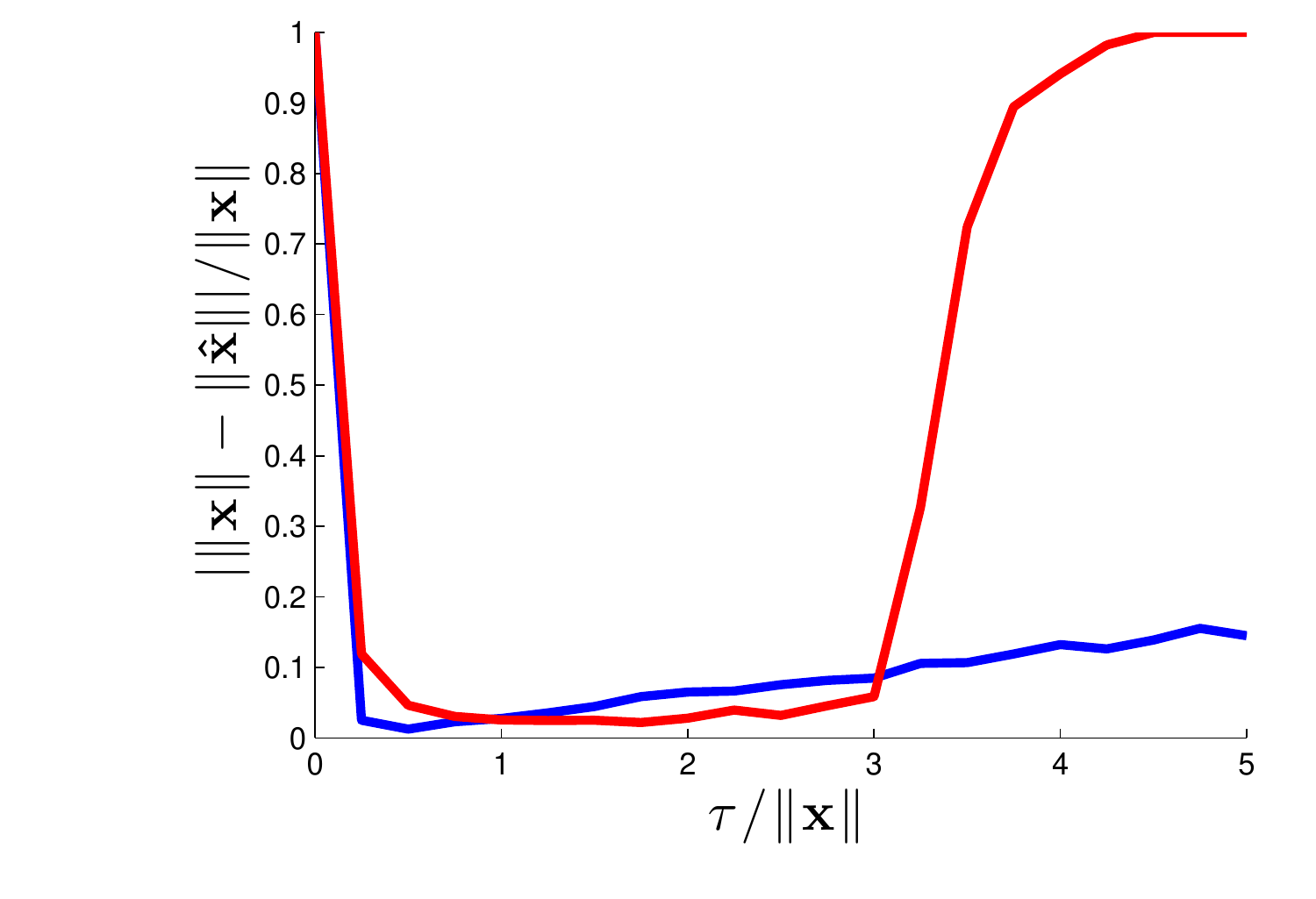}
\includegraphics[width=90mm]{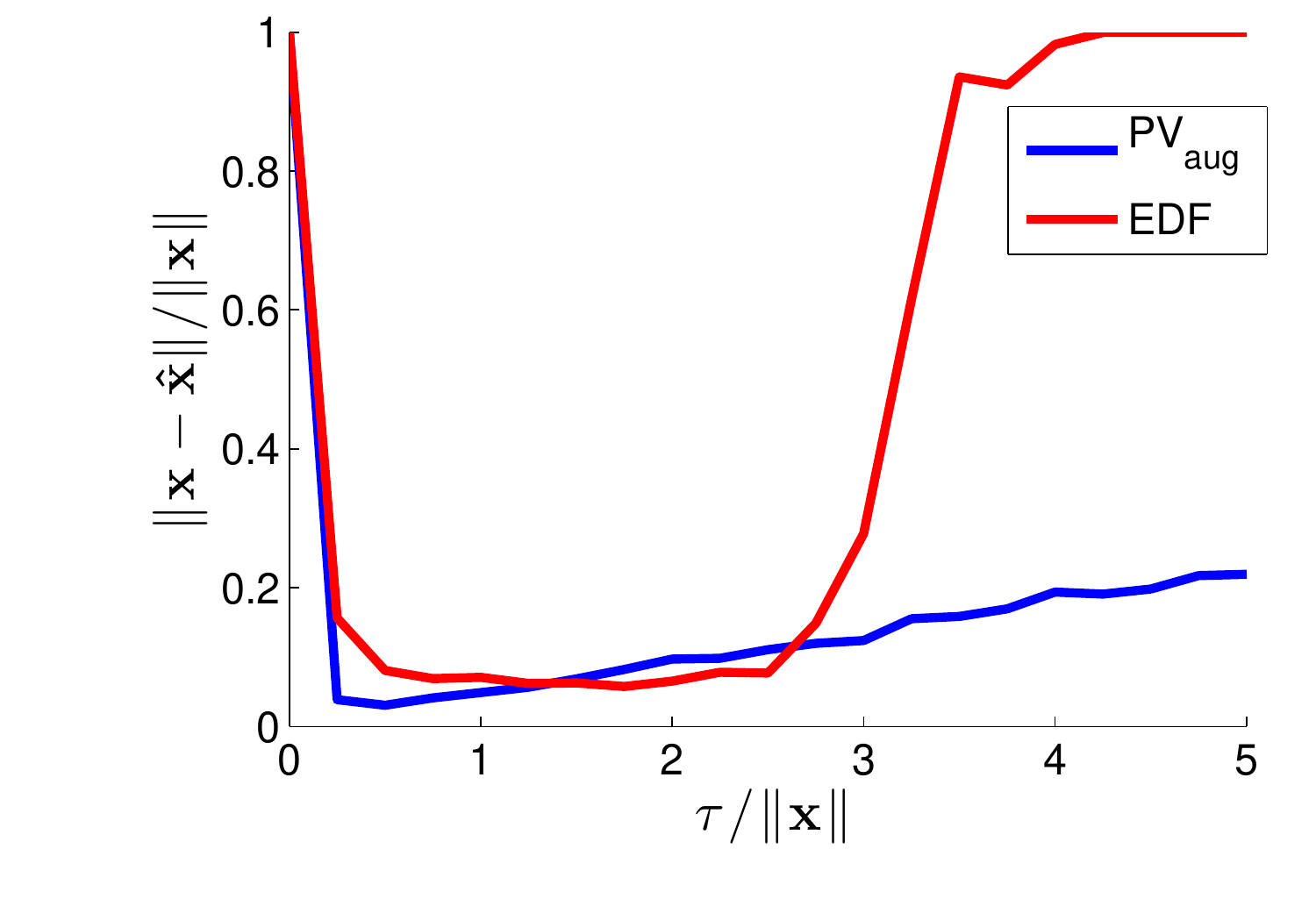}
}
 \label{tauFig} 
 \caption{Error of the reconstructed norm (left) and reconstructed signal (right), and for values of the thresholding parameter $\tau$ relative to $\| \vct{x} \|_2$.  Here m/n=6.  Results are averaged over 40 trials.}
\end{figure}

 We expect reconstruction to be poor when $\tau$ is too large or too small relative to the true norm of $\vct{x}$.   As $\tau$ goes to zero, the proportion of measurements $y _i$ that are $-1$, $\frac{ | \{i :y_i = -1\}|}{m}$ will approach $\frac{1}{2}$ for either fixed $\tau$ or random ${\cal N}(0, \tau^2)$ thresholds.  On the other hand as $\tau$ gets large $\frac{ | \{i :y_i = -1\}|}{m}$ will go to 0 for the deterministically thresholded measurements, and to $\frac{1}{2}$ for thresholds $\sim {\cal N}(0, \tau^2)$.  The poor performance at these two extremes yields the U-shaped error graphs in Figure 2.   We find that the EDF method may slightly outperform the $PV_{\text{aug}}$ method at the optimal choice of $\tau$, but that $PV_{\text{aug}}$  is more robust, its error increasing more gradually as $\tau$ is increased away from $\| \vct{x} \|_2$.

 \section{Conclusions}
We showed that norm recovery, while impossible from one-bit measurements $\sign( \langle \vct{a}_i, \vct{x} \rangle ),$ is indeed possible from one-bit measurements of the form $\sign( \langle \vct{a}_i, \vct{x} \rangle + b_i)$ for known nonzero $b_i$ and for $\vct{a}_i$ with i.i.d. standard Gaussian entries.  
We presented two methods for norm recovery, the first of which also produces an estimate of $\vct{x}$ and uses randomly chosen $b_i$, and the second of which uses fixed, deterministic $b_i$ and produces estimates of $\| \vct{x} \|_2$.  In both cases, we present uniform guarantees of accurate recovery (with high probability) given sufficient a number of measurements, provided we have some prior upper bound (or upper and lower bound) for the norm of $\vct{x}$.

\subsection*{Acknowledgment}
We would like to thank Yaniv Plan for helpful discussions about this topic.  We would also like to thank the anonymous referees for helpful comments and insights which significantly improved the paper.

\bibliographystyle{plain}
\bibliography{arxiv4}

\begin{thebibliography}{10}

\bibitem{DSP}
1-bit compressive sensing.
\newblock {http://dsp.rice.edu/1bitCS/}.
\newblock Accessed: 2015-03-29.

\bibitem{bbr13}
S.~Bahmani, P.~Boufounos, and B.~Raj.
\newblock Robust 1-bit compressive sensing via gradient support pursuit.
\newblock {\em Arxiv preprint}, 2013.

\bibitem{baraniuk2014exponential}
R.~Baraniuk, S.~Foucart, D.~Needell, Y.~Plan, and M.~Wootters.
\newblock Exponential decay of reconstruction error from binary measurements of
  sparse signals.
\newblock {\em arXiv preprint arXiv:1407.8246}, 2014.

\bibitem{benedetto2006sigma}
J.J. Benedetto, A.M. Powell, and {\"O}.~Y{\i}lmaz.
\newblock {Sigma-delta ($\Sigma\Delta$) quantization and finite frames}.
\newblock {\em Information Theory, IEEE Transactions on}, 52(5):1990--2005,
  2006.

\bibitem{blum:sdf}
J.~Blum, M.~Lammers, A.M. Powell, and {\"O}.~Y{\i}lmaz.
\newblock {Sobolev duals in frame theory and sigma-delta quantization}.
\newblock {\em Journal of Fourier Analysis and Applications}, 16(3):365--381,
  2010.

\bibitem{Boufounos09}
P.~Boufounos.
\newblock Greedy sparse signal reconstruction from sign measurements.
\newblock In {\em Signals, Systems and Computers, 2009 Conference Record of the
  Forty-Third Asilomar Conference on}, pages 1305--1309. IEEE, 2009.

\bibitem{Boufounos08}
P.~Boufounos and R.~Baraniuk.
\newblock 1-bit compressive sensing.
\newblock {\em 42nd Annual Conference on Information Sciences and Systems
  (CISS)}, 2008.

\bibitem{boufounos2014quantization}
P.~Boufounos, L.~Jacques, F.~Krahmer, and R.~Saab.
\newblock Quantization and compressive sensing.
\newblock {\em arXiv preprint arXiv:1405.1194}, 2014.

\bibitem{carota06}
E.~{C}and{\`e}s, , {T}. {T}ao, and {J}. {R}omberg.
\newblock {R}obust uncertainty principles: exact signal reconstruction from
  highly incomplete frequency information.
\newblock {\em {I}{E}{E}{E} {T}rans. {I}nform. {T}heory}, 52(2):489--509, 2006.

\bibitem{candes2006compressive}
E.~Cand{\`e}s.
\newblock Compressive sampling.
\newblock In {\em Proceedings oh the International Congress of Mathematicians:
  Madrid, August 22-30, 2006: {I}nvited lectures}, pages 1433--1452, 2006.

\bibitem{CRT05}
E.~Cand{\`e}s, J.~Romberg, and T.~Tao.
\newblock Stable signal recovery from incomplete and inaccurate measurements.
\newblock {\em Communications on Pure and Applied Mathematics}, LIX:1207?1223,
  2006.

\bibitem{C13}
M.~Cheraghchi.
\newblock Improved constructions for non-adaptive threshold group testing.
\newblock {\em Algorithmica}, 67(3):384--417, 2013.

\bibitem{dk06}
O.~Dabeer and A~Karnik.
\newblock Signal parameter estimation using 1-bit dithered quantization.
\newblock {\em IEEE Trans. on Infor. Theory}, 52(12):5389--5405, 2006.

\bibitem{Dasgupta}
S.~Dasgupta and A.~Gupta.
\newblock An elementary proof of a theorem of {J}ohnson and {L}indenstrauss.
\newblock {\em Random Structures and Algorithms}, 22:60--65, 2003.

\bibitem{Daub-dev}
I.~Daubechies and R.~DeVore.
\newblock {Approximating a bandlimited function using very coarsely quantized
  data: a family of stable sigma-delta modulators of arbitrary order}.
\newblock {\em Ann. Math.}, 158(2):679--710, 2003.

\bibitem{davenport2012}
M.~Davenport, Y.~Plan, E.~Berg, and M.~Wootters.
\newblock 1-bit matrix completion.
\newblock {\em Information and Inference}, 3(3):189--223, 2014.

\bibitem{DGK10}
P.~{D}eift, C.~S. G{\"u}nt{\"u}rk, and F.~{K}rahmer.
\newblock An optimal family of exponentially accurate one-bit sigma-delta
  quantization schemes.
\newblock {\em Communications on Pure and Applied Mathematics}, 64(7):883--919,
  2011.

\bibitem{donoho2006compressed}
D.~Donoho.
\newblock Compressed sensing.
\newblock {\em Information Theory, IEEE Transactions on}, 52(4):1289--1306,
  2006.

\bibitem{Dvoretzky1956}
A.~Dvoretzky, J.~Kiefer, and J.~Wolfowitz.
\newblock Asymptotic minimax character of the sample distribution function and
  of the classical multinomial estimator.
\newblock {\em The Annals of Mathematical Statistics}, pages 642--669, 1956.

\bibitem{foucart2013mathematical}
S.~Foucart and H.~Rauhut.
\newblock {\em A mathematical introduction to compressive sensing}.
\newblock Springer, 2013.

\bibitem{G-exp}
C.S. G{\"u}nt{\"u}rk.
\newblock One-bit sigma-delta quantization with exponential accuracy.
\newblock {\em Communications on Pure and Applied Mathematics},
  56(11):1608--1630, 2003.

\bibitem{Jacques11}
L.~Jacques, J.~Laska, P.~Boufounos, and R.~Baraniuk.
\newblock Robust 1-bit compressive sensing via binary stable embeddings of
  sparse vectors.
\newblock {\em arXiv preprint arXiv:1104.3160}, 2011.

\bibitem{jacques2011}
L.~Jacques, J.~Laska, P.~Boufounos, and R.~Baraniuk.
\newblock Robust 1-bit compressive sensing via binary stable embeddings of
  sparse vectors.
\newblock {\em Information Theory, IEEE Transactions on}, 59(4):2082--2102,
  April 2013.

\bibitem{JL}
{W}.~{B}. {J}ohnson and {J} {L}indenstrauss.
\newblock {E}xtensions of {L}ipschitz mappings into a {H}ilbert space.
\newblock {\em {C}ontemp. {M}ath}, 26:189--206, 1984.

\bibitem{Kamilov}
U.~Kamilov, A.~Bourquard, A.~Amini, and M.~Unser.
\newblock One-bit measurements with adaptive thresholds.
\newblock {\em Signal Processing Letters, IEEE}, 19(10):607--610, 2012.

\bibitem{KSW12}
F.~Krahmer, R.~Saab, and R.~Ward.
\newblock Root-exponential accuracy for coarse quantization of finite frame
  expansions.
\newblock {\em Information Theory, IEEE Transactions on}, 58(2):1069 --1079,
  February 2012.

\bibitem{KSY13}
F.~Krahmer, R.~Saab, and {\"O}~Y{\i}lmaz.
\newblock Sigma-delta quantization of sub-{G}aussian frame expansions and its
  application to compressed sensing.
\newblock {\em Information and Inference}, page iat007, 2014.

\bibitem{laska2012regime}
J.~Laska and R.~Baraniuk.
\newblock Regime change: Bit-depth versus measurement-rate in compressive
  sensing.
\newblock {\em Signal Processing, IEEE Transactions on}, 60(7):3496--3505,
  2012.

\bibitem{plan2013one}
Y.~Plan and R.~Vershynin.
\newblock One-bit compressed sensing by linear programming.
\newblock {\em Communications on Pure and Applied Mathematics}, 2013.

\bibitem{Plan12}
Y.~Plan and R.~Vershynin.
\newblock Robust 1-bit compressed sensing and sparse logistic regression: A
  convex programming approach.
\newblock {\em IEEE Trans. Infor. Theory}, 59(1):482--494, 2013.

\bibitem{rac03}
D.~Rousseau, G.~Anand, and F~Chapeau-Blondeau.
\newblock Nonlinear estimation from quantized signals: Quantizer optimization
  and stochastic resonance.
\newblock {\em Proc. 3rd Int. Symp. Physics in Signal and Image Processing},
  pages 89--92, 2003.

\bibitem{saab2015quantization}
R.~Saab, R.~Wang, and O.~Yilmaz.
\newblock Quantization of compressive samples with stable and robust recovery.
\newblock {\em arXiv preprint arXiv:1504.00087}, 2015.

\bibitem{Yan12}
M.~Yan, Y.~Yang, and S.~Osher.
\newblock Robust 1-bit compressive sensing using adaptive outlier pursuit.
\newblock {\em Signal Processing, IEEE Transactions on}, 60(7):3868--3875,
  2012.

\end{thebibliography}

\end{document}